\newtheorem{Def}{Definition}
\newtheorem{theorem}{Theorem}
\newtheorem{lemma}{Lemma}
\begin{document}
%
\title{Unbalanced Incomplete Multi-view Clustering\\
 via the Scheme of View Evolution: \\
Weak Views Are Meat; Strong Views Do Eat}

\author{Xiang~Fang,
        Yuchong~Hu,~\IEEEmembership{Member,~IEEE,}
        Pan~Zhou,~\IEEEmembership{Senior Member,~IEEE,}
        and~Dapeng~Oliver~Wu,~\IEEEmembership{Fellow,~IEEE}

\thanks{This work is supported by National Natural Science Foundation of China (NSFC) under grant no. 61972448. (\emph{Corresponding author: Pan~Zhou}.)}
\thanks{X. Fang is with the School of Computer Science and Technology, Key Laboratory of Information Storage System Ministry of Education of China, Huazhong University of Science and Technology, Wuhan 430074, China (e-mail: xfang9508@gmail.com).}
\thanks{Y. Hu is with the School of Computer Science and Technology, Key Laboratory of Information Storage System Ministry of Education of China, Huazhong University of Science and Technology, Wuhan 430074, China (e-mail: yuchonghu@hust.edu.cn).}
\thanks{P. Zhou is with the Hubei Engineering Research Center on Big Data Security, School of Cyber Science and Engineering, Huazhong University of Science and Technology, Wuhan 430074, China (e-mail: panzhou@hust.edu.cn).}
\thanks{D. O. Wu is with the Department of Electrical and Computer Engineering, University of Florida, Gainesville, FL 32611 USA (e-mail: dpwu@ieee.org).}
\thanks{$\copyright$ 2021 IEEE. Personal use of this material is permitted.  Permission from IEEE must be obtained for all other uses, in any current or future media, including reprinting/republishing this material for advertising or promotional purposes, creating new collective works, for resale or redistribution to servers or lists, or reuse of any copyrighted component of this work in other works.}}

\maketitle

\begin{abstract}
Incomplete multi-view clustering is an important technique to deal with real-world incomplete multi-view data.
Previous methods assume that all views have the same incompleteness, i.e., balanced incompleteness.
However, different views often have distinct incompleteness, i.e., unbalanced incompleteness, which results in \emph{strong views} (low-incompleteness views) and \emph{weak views} (high-incompleteness views). The unbalanced incompleteness prevents us from directly using previous methods.
In this paper, inspired by the effective biological evolution theory,
we design the novel scheme of \emph{view evolution} to cluster strong and weak views.
Moreover, we propose an Unbalanced Incomplete Multi-view Clustering method (UIMC), which is the \emph{first} effective method based on view evolution for unbalanced incomplete multi-view clustering.
Compared with previous methods, UIMC has two unique advantages: 1) it proposes weighted multi-view subspace clustering to integrate unbalanced incomplete views, which effectively solves the unbalanced incomplete multi-view clustering problem;
2) it designs the low-rank representation to recover the data, which diminishes the impact of the incompleteness and noises.
Extensive experimental results demonstrate that UIMC improves the clustering performance by up to 40\% on three evaluation metrics over other state-of-the-art methods. We provide codes for all of our experiments in \url{https://github.com/ZeusDavide/TETCI_UIMC}.
\end{abstract}

\begin{IEEEkeywords}
Unbalanced incomplete multi-view clustering, weak view, strong view, view evolution.
\end{IEEEkeywords}

\IEEEpeerreviewmaketitle

\section{Introduction}
\label{section:introduction}
\IEEEPARstart{R}{eal}
data are often with multiple modalities~\cite{8782800,liu2023exploring,wang2025taylor,fang2026towardsicml,kuai2026dynamic,wang2025point,fang2025your,zhang2025monoattack,fang2023hierarchical,liu2024towards,yang2025eood,fang2022multi,fang2026cogniVerse,lei2025exploring,fang2023you,wang2025dypolyseg,fang2025hierarchical,yan2026fit,fang2025adaptive,wang2026topadapter,cai2025imperceptible,fang2026slap,wang2026reasoning,fang2026immuno,wang2026biologically,fang2026disentangling,wang2025reducing,fang2026advancing,fang2026unveiling,wang2026from,liu2023conditional,liu2026attacking,fang2026rethinking,wang2025seeing,fang2026towards,fang2025multi,fang2024fewer,liu2024pandora,fang2024multi,fang2025turing,fang2024not,liu2023hypotheses,fang2024rethinking,liu2024unsupervised,fang2023annotations,xiong2024rethinking,fang2020v,wang2025prototype,zhang2025manipulating,fang2026align,tang2024reparameterization,fang2025adaptivetai,tang2025simplification,fang2021animc,cai2026towards,fang2020double} or collected from different sources~\cite{2016A,WenWZH18,lu2013unified,he2014comment,8114198,gao2018recommendation,gao2019explainable,8387526,8405592}, which are called multi-view data~\cite{8766847,blum1998combining,li2018multi1,Tang2019AAAI,Peng2019,Liu2016}. In many real-world datasets, many views often lose some instances, which results in incomplete views~\cite{zhang2019multi}.
As an illustration, for a multilingual document clustering task, different languages of a document represent distinct views, but we are difficult to
translate each document into all languages~\cite{hu2018doubly,li2017implicit1,wang2019compact}.
The incompleteness will reduce the information available for clustering, which hurts clustering performance.
Due to the ability to integrate these incomplete views, incomplete multi-view clustering has attracted more and more attention~\cite{wang2013collaborative,min2017delicious1}.

Recently, some incomplete multi-view clustering methods have been proposed \cite{zhuge2020joint,tao2019joint,zhuge2019simultaneous}.
%
As a pioneering work, \cite{li2014partial} proposes PVC by learning a common latent subspace for two incomplete views. To couple the instances from two incomplete views, \cite{zhao2016incomplete} proposes IMG by integrating the latent subspace generation and the compact global structure into a unified framework. To handle the situation of multiple incomplete views, \cite{shao2015multiple} proposes MIC by integrating the joint weighted nonnegative matrix factorization~\cite{lee1999learning} and $L_{2,1}$ regularization. To reduce the deviation when the missing rate is large, \cite{hu2018doubly} proposes DAIMC by considering the instance alignment information and aligning different basis matrices simultaneously. To solve the partially view-aligned problem, \cite{huang2020partially} combines representation learning and data alignment. To cluster multi-view
data without parameter selection on cluster size, \cite{peng2019comic} proposes COMIC by automatically learning almost all parameters in a data-driven way.
To learn the common representation for incomplete multi-view clustering, \cite{wen2018incompleteb} proposes IMSC$\_$AGL by exploiting the graph learning~\cite{nie2017self,zong2018weighted,nie2018auto} and spectral clustering techniques. To obtain the robust clustering results, UEAF~\cite{wen2019unified} performs the unified common embedding aligned with incomplete views inferring framework.

\begin{figure}[t]
\centering
\includegraphics[width=0.47\textwidth]{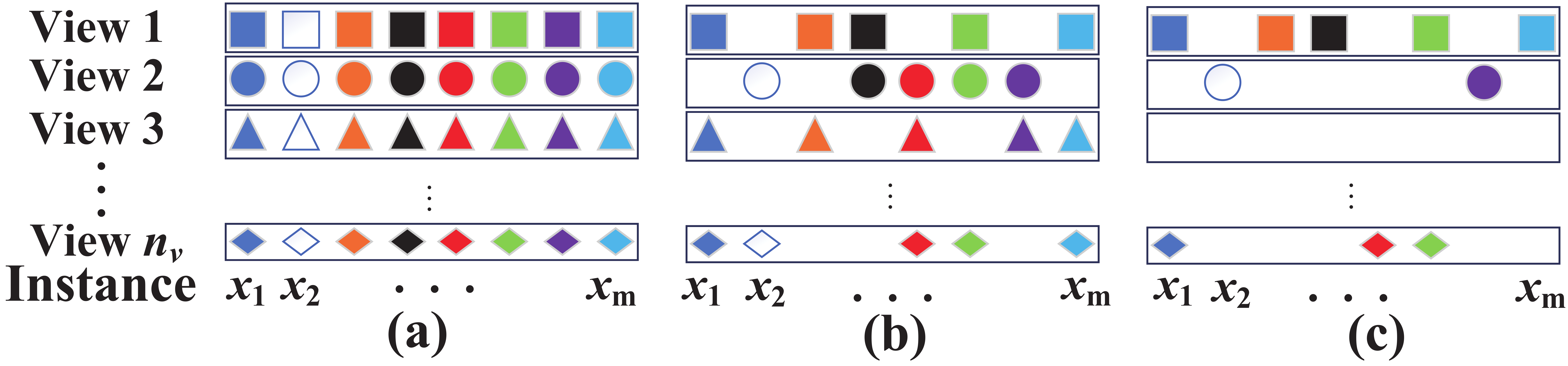}
\caption{Different types of multi-view data, where (a) is complete multi-view data (all views have no missing instances), (b) is balanced incomplete multi-view data (different views have the same number of missing instances), and (c) is unbalanced incomplete multi-view data (different views have the distinct number of missing instances).}
\label{fig:incomplete}
\end{figure}
%
However, almost all incomplete multi-view clustering methods use the view alignment technology for clustering. When using the technology, they always assume that each view has the same incompleteness (called \emph{balanced} incompleteness in Fig.~\ref{fig:incomplete}(b)) and ignore that different views often have distinct incompleteness (called \emph{unbalanced} incompleteness in Fig.~\ref{fig:incomplete}(c)).
For convenience, we name the view with low incompleteness as \emph{strong view} and the view with high incompleteness as \emph{weak view}.
But real-world incomplete multi-view data are often unbalanced incomplete, rather than balanced incomplete.
For example, in web page clustering, different types of data, such as hyperlinks, texts, images, and videos, can be regarded as different views, and most web pages (e.g., Google Scholar and arXiv) only have hyper-links and texts (which are strong views) but not images and videos (which are weak views).
Unfortunately, previous incomplete multi-view clustering methods cannot directly integrate unbalanced incomplete views.
It is because these views have different numbers of presented instances, even some views have no instance (e.g., View 3 in Fig.~\ref{fig:incomplete}(c)). The unbalanced incompleteness makes these methods unable to align different views for clustering.
Therefore, unbalanced incomplete multi-view clustering remains unexplored.

In this paper, we propose an Unbalanced incomplete Multi-view Clustering method (UIMC) to explore the problem.
The biological evolution theory is an effective method to analyze the impact of different traits (strong traits and weak traits) on population development by weighting each trait. Inspired by the theory, our main idea is to integrate strong views and weak views by designing \emph{the scheme of view evolution}, which consists of weighted multi-view subspace clustering (weighted MVSC) and the low-rank representation.
By integrating weighted MVSC and the low-rank representation, our proposed UIMC minimizes the disagreement between each cluster indicator matrix and the consensus, and adaptively learns the proper weights for strong and weak views.
The contributions of UIMC mainly include:
\begin{itemize}
  \item As far as we know, UIMC is a \emph{pioneering} work by introducing the biological evolution theory into clustering algorithms, which has the potential to stimulate new biological ideas and findings in machine learning tracks. Moreover, inspired by biological evolution, UIMC might be the \emph{first} effective method by proposing the scheme of view evolution for unbalanced incomplete multi-view clustering.
  \item By designing weighted MVSC, UIMC directly integrates unbalanced incomplete views by
structuring their Laplacian graphs and weighting these views, which decreases the impact of unbalanced incompleteness.
Also, UIMC minimizes the disagreement between each cluster indicator matrix and the consensus cluster indicator matrix, which aligns the same instances in different views.
Besides, UIMC quantifies the contributions of strong and weak views by
weighting each view properly based on its incompleteness, which makes full use of these presented instances for clustering.
  \item With the help of the low-rank representation, UIMC recovers the data structure for clustering by $\Gamma$-norm and $L_{2,1}$-norm, which further reduces the influence of incompleteness and noises.
  \item Experimental results on four real-world datasets demonstrate its superiority over the state-of-the-art methods. Impressively, based on three evaluation metrics, UIMC raises the clustering performance by more than 40\% in representative cases.
\end{itemize}
The rest of the paper is organized as follows.
Section~\ref{section:back} describes the notations and the related works. Section~\ref{section:meth} first motivates UIMC's main idea, then proposes UIMC, and finally solves it efficiently. Section~\ref{section:exp} evaluates UIMC's performance. Section~\ref{section:con} concludes the paper.

\section{Notations and Related Works}
\label{section:back}
We first define the notations used throughout the paper and then describe three relevant works.

\noindent\textbf{Notation:} Given an original dataset $\{\bm{\overline{X}}^{(v)} \in \mathbb{R}^{d_{v} \times m}, v = 1,\ldots,n_v\}$ with $n_{v}$ views, $m$ instances, and $\bm{\overline{X}}^{(v)}$ is the $v$-th view matrix and $d_{v}$ is the feature dimension of $\bm{\overline{X}}^{(v)}$. We can extract these presented instances from the original view matrix $\bm{\overline{X}}^{(v)}$
and update it to a new view matrix $\bm{X}^{(v)} \in \mathbb{R}^{d_{v} \times k_v}$, which has $k_{v}$ presented instances.
$\bm{M}^{(v)} \in \mathbb{R}^{k_v \times m}$ is an indicator matrix to represent the update.
$\mathbf{1}$ is a column vector with all the elements as one; $\mathbf{0}$ is a zero matrix; $\bm{I}$ is the identity matrix.
For the updated view matrix $\bm{X}^{(v)}$, $w_v$ is its weight, $\bm{E}^{(v)} \in \mathbb{R}^{d_{v} \times k_v}$ is its error matrix, $\bm{A}^{(v)} \in \mathbb{R}^{k_{v} \times k_v}$ is its subspace matrix,
$\bm{F}^{(v)} \in \mathbb{R}^{m \times c}$ is its cluster indicator matrix (where $c$ is the cluster number), $\bm{F}^{\ast} \in \mathbb{R}^{m \times c}$ is its consensus cluster indicator matrix. $||\cdot||_G$ is the $\Gamma$-norm; $||\cdot||_1$ is the $L_1$-norm; $||\cdot||_F$ is the Frobenius norm; $||\cdot||_{2,1}$ is the $L_{2,1}$-norm; $|\cdot|$ denotes the absolute value operation. $\theta$ is a penalty parameter. $\alpha,\beta$ and $\eta$ are nonnegative parameters, each of which is a trade-off between two specific objectives.

\noindent\textbf{Unbalanced incomplete multi-view clustering} is a problem of clustering incomplete multi-view data, where at least two views have different missing rates.
\begin{Def}\label{def_qiangruo}
(Strong view, weak view and dying view). For the $\{v\}_{v=1}^{n_v}$-th view matrix $\bm{\overline{X}}^{(v)}$, $r_v$ denotes the missing rate. $\bm{\overline{X}}^{(v)}$ is the strong view if its missing rate is less than the average  ($r_v<(r_1+r_2+\ldots+r_{n_v})/{n_v}$). $\bm{\overline{X}}^{(v)}$ is the weak view if its missing rate is greater than the average ($r_v>(r_1+r_2+\ldots+r_{n_v})/{n_v}$). $\bm{\overline{X}}^{(v)}$ is the dying view if the number of presented instances is fewer than the number of clusters ($m \times(1-r_v)<c$).
\end{Def}
\subsection{Unbalanced Incomplete Multi-view Data}
For an incomplete multi-view dataset, the $v$-th original view matrix (including missing and presented instances) is represented as $\bm{\overline{X}}^{(v)}\in \mathbb{R}^{d_{v} \times m}$, where $d_{v}$ is the feature dimension and $m$ is the instance number. By removing the columns with missing instances from the view matrix (i.e., preserving these presented instances), we can update the $v$-th original view matrix to a new view matrix $\bm{X}^{(v)}\in \mathbb{R}^{d_v\times k_v}$, where $k_v$ is the number of presented instances ($k_v<m$).
To represent the update, we leverage an indicator matrix $\bm{M}^{(v)}\in \mathbb{R}^{k_v\times m}$ defined as:
\begin{align}\label{M}
  \bm{M}_{i,j}^{(v)} \!=\! \left\{ \begin{array}{ll}
                     1, & \textrm{if the }\bm{X}_i^{(v)}\textrm{ is in }\bm{\overline{X}}_j^{(v)}; \\
                     0, & \textrm{otherwise,}
                   \end{array}
  \right.
\end{align}
\noindent where $\bm{\overline{X}}_j^{(v)}$ is the $j$-th column of $\bm{\overline{X}}^{(v)}$, and
$\bm{X}_i^{(v)}$ is the $i$-th column of $\bm{X}^{(v)}$, which is the $v$-th updated view matrix.
\subsection{Multi-view Subspace Clustering}
As a classic complete multi-view clustering method,
multi-view subspace clustering (MVSC) tries to obtaining a cluster indicator matrix for clustering~\cite{gao2015multi}.
For a complete multi-view dataset $\{\bm{\overline{X}}\}_{v=1}^n$,
MVSC is formulated as:
\begin{align}\label{jm}
J=&\min\limits_{\bm{Z}^{(v)},\bm{E}^{(v)},\bm{F}}\sum_v(||\bm{\overline{X}}^{(v)}-\bm{\overline{X}}^{(v)}\bm{Z}^{(v)}-\bm{E}^{(v)}||_F^2\nonumber\\
&+\beta||\bm{E}^{(v)}||_1+\eta\text{Tr}(\bm{F}^T\bm{L}_Z^{(v)}\bm{F})) \nonumber\\
\mbox{s.t. }&\bm{F}\bm{F}^T=\bm{I},\bm{Z}^{(v)^T}\mathbf{1}=\mathbf{1},\bm{Z}_{i,i}^{(v)}=0,
\end{align}
where for complete view $v$, $\bm{L}_Z^{(v)}$ is its Laplacian graph, $\bm{E}^{(v)}$ is its error matrix,   and $\bm{L}_Z^{(v)}$ is its Laplacian graph, $\bm{Z}^{(v)}\in \mathbb{R}^{m \times m}$ is its subspace matrix, $\bm{Z}_{i,i}^{(v)}=0$ means that each diagonal element of $\bm{Z}$ is 0; $\bm{F}$ is the cluster indicator matrix.
However, MVSC can only cluster complete multi-view data. When dealing with incomplete multi-view data, MVSC will fail due to the inability to learn the cluster indicator matrix from the incomplete view matrix.
\subsection{Biological Evolution Theory}\label{subsection:jinhua}
Biological evolution theory is an effective method to analyze the impact of different biological traits on biological population development.
As a milestone work of the biological evolution theory, ``Survival of the fittest'' is proposed by Herbert Spencer~\cite{spencer1875principles}.
Based on this work,
Darwin proposed Darwin's theory of evolution~\cite{darwin2004origin}, whose key idea is the natural selection from one generation to the next. Given a biological population, the natural selection function ($J_b$) is formulated as
\begin{align}\label{jinhua}
&J_b=\sum_{v=1}^{n_t}a_v\bm{T}^{(v)},\nonumber\\
&\mbox{s.t. }\sum_{v=1}^{n_t}a_v=1,
\end{align}
where $n_t$ is the number of biological traits; $\bm{T}^{(v)}$ is the $v$-th trait matrix; $a_v$ is the ratio of $\bm{T}^{(v)}$.

By changing the ratio of the biological trait, heritable variation drives the biological evolution. Thus, we can rewrite Eq~\eqref{jinhua} as
\begin{align}\label{jiyinjinhua}
&J_g=\sum_{v=1}^{n_t}\overline{a}_v\mu_b\bm{T}^{(v)},\nonumber\\
&\mbox{s.t. }\sum_{v=1}^{n_t}\overline{a}_v=1,
\end{align}
where $\mu_b$ is a parameter to denote heritable variation, $\overline{a}_v\mu=a_v$.

Neutral theory of molecular evolution divides the heritable variation into three types: favorable variation, harmful variation and neutral variation~\cite{kimura1983neutral}. Similarly, we divide $\mu_b$ into three types:
\begin{align}\label{bianyijiyin}
\textrm{The heritable variation is} \left\{ \begin{array}{ll}
                     \textrm{favorable,} & \textrm{if }\mu_b>1; \\
                     \textrm{harmful,}  & \textrm{if }0<\mu_b<1; \\
                     \textrm{neutral,} & \textrm{if }\mu_b=1. \\
                   \end{array}
    \right.
\end{align}
Combining Eq.~\eqref{jiyinjinhua} and Eq.~\eqref{bianyijiyin}, we can see that the favorable variation will increase the ratio of the biological trait; the harmful variation will decrease the ratio; the neutral variation will not change the ratio. Therefore, we can think that the favorable variation has a positive effect; the harmful variation has a negative effect; the neutral update does not affect.

If $\mu_b>1$, the corresponding trait $\bm{T}^{(v)}$ is a strong trait; if $0<\mu_b<1$, $\bm{T}^{(v)}$ is a weak trait.
When the proportion of a trait is much smaller than the average $1/n_t$, the trait (we call it a \emph{dying trait}) often disappears. After  the natural selection, strong traits will have larger ratios, while weak traits will have smaller ratios.
Therefore, we can visually describe the selection as: ``The weak are meat, and the strong do eat\footnote{``Meat'' and ``eating'' do not mean a predator relationship, but a competitive relationship (or strong biological traits eat the weights of the weak biological traits).}''~\cite{mitchell2008cloud}.

After enough generations in a biological population, the proportion of each trait will be approximately unchanged. At this point, we think that the population becomes stable.

\section{Scheme of View Evolution}
\label{section:meth}
We first present the motivation for proposing the scheme of view evolution, then explain the challenges encountered, and finally meet the challenge.

\subsection{Motivation}
By learning a consensus matrix shared by all views (denoted by $\bm{C}$), previous incomplete multi-view clustering methods can
align balanced incomplete views for clustering.
But many real-world datasets often contain unbalanced incomplete views as shown in Fig.~\ref{fig:incomplete}(c) where views 1, 2 and 3 have three, six and eight missing instances, respectively. Thus, these unbalanced incomplete view matrices $\{\bm{X}^{(v)}\}_{v=1}^{n_v} \in \mathbb{R}^{d_{v} \times k_v}$ have distinct numbers of columns and rows, which make it difficult for previous methods to obtain the consensus matrix (i.e., $\bm{C}$) for view alignment.
Thus, different view matrices lose distinct numbers of rows or columns and the unbalanced incompleteness will make it difficult for these methods to obtain available $\bm{C}$.
What is worse, when a view has no instances (i.e., View 3 in Fig.~\ref{fig:incomplete}(c)), these methods will fail during matrix calculation because no instance on the view can be used for calculation.
Also, these methods assume that different views have the same contribution to clustering.
But different views will have unbalanced contributions for clustering because they have different numbers of presented instances.

\begin{table}[t]
\centering
\caption{Connection points between the biological evolution theory and our scheme of view evolution.}
\begin{tabular}{c|c}
\hline
Biological Evolution  & View Evolution \\\hline
trait matrix $\bm{T}^{(v)}$   & view matrix $\bm{X}^{(v)}$ \\\hline
strong trait   & strong view \\\hline
weak trait   & weak view \\\hline
dying trait   & dying view \\\hline
heritable variation   & view variation\\\hline
each generation in the population   & each iteration in the optimization\\\hline
stable biological population   & convergent objective function \\\hline
\end{tabular}
\label{lianjiedian}
\end{table}
The unbalanced contributions will make these methods unable to make full use of the presented instance information, which makes these methods difficult to obtain satisfactory clustering performance.

Biological evolution theory provides a natural method to analyze the changes of strong and weak traits in biological evolution.
Inspired by this, we try to introduce the theory into unbalanced incomplete multi-view clustering.
To facilitate understanding, we give some connection points between the biological evolution theory and our scheme of view evolution in Table~\ref{lianjiedian}:
1) The biological evolution theory uses the trait matrix $\bm{T}^{(v)}$ to represent the $v$-th trait, while our view evolution leverage the view matrix $\bm{X}^{(v)}$ to denote the $v$-th view.
2) Strong traits have larger ratios than weak traits, and strong traits have more information about the  population; strong views have lower incompleteness than weak views, and we can think strong views have more information about the dataset.
3) Dying traits will disappear due to natural selection, while dying views will be removed.
4) The heritable variation can drive the biological evolution, while the view variation can lead to the view evolution.
5) After a generation in the biological population,  the ratio of the biological trait will change; after an iteration in the optimization, the weight of each view will be updated.
6) After enough generations, a biological population will become stable; after several iterations, our objective function will converge.
%
%


In Eq.~\eqref{jinhua}, the biological evolution theory integrates strong and weak traits by assigning a weight to each trait.
Inspired by
the theory, to integrate strong and weak views,
we propose the scheme of view evolution ``weak views are meat; strong views do eat''.
An intuitive idea on the scheme is to directly assign a proper weight to each view (similar to Eq.~\eqref{jinhua}), and then linearly superimpose the objective function of each view. However, when implementing this idea, we have the following challenges:
1) how to bridge unbalanced incomplete views (i.e., construct view representations with the same matrix size for different views); 2) how to group similar instances
on each view;
3) how to quantify the contribution of each view; 4) how to quantify view variation.
In the next section, we will meet these challenges.
\subsection{Weighted MVSC}

\noindent\textbf{1) Bridging unbalanced incomplete views by constructing Laplacian graphs:} When directly using MVSC to process an unbalanced dataset $\{\bm{X}^{(v)}\}_{v=1}^{n_v}\in \mathbb{R}^{d_{v} \times k_v}$,
we will obtain these subspace matrices $\{\bm{A}^{(v)}\}_{v=1}^{n_v}\in \mathbb{R}^{k_v \times k_v}$ with different matrix sizes. Since $k_v$ is different for these views, it is difficult to directly deal with these subspaces for view bridge.
To facilitate view bridge, we construct the following Laplacian graph as the view representation
\begin{align}\label{lapulasitu}
  \bm{M}^{(v)^T}\bm{L}_A^{(v)}\bm{M}^{(v)}=\bm{D}_A^{(v)}-\bm{W}_A^{(v)},
\end{align}
where $\bm{D}_{A_{i,i}}^{(v)}$=$\sum_j\bm{W}_{A_{i,j}}^{(v)}$ and $\bm{W}_A^{(v)}$=$(|\bm{M}^{(v)^T}\bm{A}^{(v)}\bm{M}^{(v)}|+|\bm{M}^{(v)^T}\bm{A}^{(v)^T}\bm{M}^{(v)}|)/{2}$.
Note that the Laplacian graph $\bm{M}^{(v)^T}\bm{L}_A^{(v)}\bm{M}^{(v)} \in \mathbb{R}^{m\times m}$.
Since $m$ is constant,
the Laplacian graphs of different views have the same matrix size, which can be used to bridge unbalanced incomplete views.

\noindent\textbf{2) Aligning views by introducing the disagreement function:} Aligning the same instances in different views is the goal of bridging unbalanced incomplete views. We align different views based on the cluster indicator matrix, which represents the relationship between instances and clusters. Based on the relationship, we can group similar instances into one cluster.
We assume that the $v$-th cluster indicator matrix $\bm{F}^{(v)}$ is a perturbation of the consensus cluster indicator matrix $\bm{F}^{\ast}$. To reduce further the negative influence of the perturbation, we align the same instances in different views by minimizing the disagreement between $\bm{F}^{(v)}$ and $\bm{F}^{\ast}$.
For different datasets, we often need distinct disagreement functions.
We design an adjustable disagreement function $d(\bm{F}^{\ast},\bm{F}^{(v)})$ as follows
\begin{align}\label{F}
d(\bm{F}^{\ast},\bm{F}^{(v)})=||\frac{\bm{F}^{(v)}\bm{F}^{(v)^T}}{||\bm{F}^{(v)}\bm{F}^{(v)^T}||_F^{k}}-\frac{\bm{F}^{\ast}\bm{F}^{\ast^T}}{||\bm{F}^{\ast}\bm{F}^{\ast^T}||_F^{k}}||_F^2,
\end{align}
where $k$ is a positive integer that is adjustable as needed, $\bm{F}^{(v)}\bm{F}^{(v)^T}$ is the similarity matrix of $\bm{F}^{(v)}$, and $\bm{F}^{\ast}\bm{F}^{\ast^T}$ is the similarity matrix of $\bm{F}^{\ast}$. Note that as $k$ changes, we can obtain different disagreement functions. This adjustable function can effectively improve the generalization ability of our scheme.

\noindent\textbf{3) Integrating views by weighting them:}
The alignment in Eq.~\eqref{F} cannot quantify the contribution of each view to clustering. Since strong views have more presented instances and lower incompleteness than weak views, strong views often contain more available data information and have greater contributions.
To make full use of the information for clustering,
we give strong views larger weights than weak views (larger weight means greater contribution).
To judge the contribution of view $v$, we design the following weight $w_v$ based on its incompleteness
\begin{align}\label{w_{v}}
w_{v} \!=\! \left\{ \begin{array}{ll}
                     0, & \textrm{if }k_v<c; \\
                     \frac{\sum_i\sum_j\bm{M}_{i,j}^{(v)}}{\sum_v\sum_i\sum_j\bm{M}_{i,j}^{(v)}}\times n_v, & \textrm{ otherwise,}
                   \end{array}
  \right.
\end{align}
where $\sum_i\sum_j\bm{M}_{i,j}^{(v)}$ denotes the number of presented instances in the $v$-th view, $\sum_v\sum_i\sum_j\bm{M}_{i,j}^{(v)}$ denotes the total number of present instances in all views.
For the case $k_v<c$ (i.e., view $v$ is a dying view) in Eq.~\eqref{w_{v}}, it is incapable to divide $k_v$ instances into $c$ clusters.
To learn available weights for unbalanced incomplete views,
we remove dying views by assigning them zero weights.
When we perform clustering, these views with $w_{v}=0$ will be removed. If the number of dying views is $t_v$, we will update $n_v$ to $n_v-t_v$.
Since strong views have more presented instances than weak views, strong views often contain more data information. To make full use of the presented data information for clustering, we give strong views larger weights than weak views in Eq.~\eqref{w_{v}}.

\noindent\textbf{4) Quantifying view variation:} To drive the view evolution, similar to the biological evolution theory, we need to quantify view variation. Inspired by Section~\ref{subsection:jinhua}, the variation and updating the weight are the same in form.
To update each weight adaptively, we leverage an iterative optimization procedure (see Section~\ref{subsection:opt} in more detail). After each iteration, we can update the weight $w_v$ as follows:
\begin{align}\label{gengw}
w_v=\max(w_{\min},\min(\mu w_v, w_{\max})),
\end{align}
where $w_{\max}$ is the upper bound of $w_v$, $w_{\min}$ is the lower bound of $w_v$. In our paper, we set $w_{\max}=w_v\prod\limits_i(1+0.2^{i})$ and $w_{\min}=w_v\prod\limits_i(1-0.2^{i})$, where $i$ is the iteration number.  $\mu$ is  a parameter to denote heritable variation (i.e., updating $w_v$). To analyze the relative availability of each view, we normalize these weights (i.e., $\sum_vw_v=1$). In our experiments, we use the \emph{normalize} function of MATLAB to normalize them. Based on the effect of $\mu$ on the weight, we divide the update into three types
\begin{align}\label{bianyi}
\textrm{The view variation is} \left\{ \begin{array}{ll}
                     \textrm{favorable,} & \textrm{if }\mu>1; \\
                     \textrm{harmful,}  & \textrm{if }0<\mu<1;\\
                     \textrm{neutral,} & \textrm{if }\mu=1. \\
                   \end{array}
    \right.
\end{align}
Combining Eq.~\eqref{gengw} and Eq.~\eqref{bianyi}, similar to the biological evolution theory, the favorable update will increase the weight and the harmful update will decrease the weight.
To improve the clustering results, we give strong views larger $\mu$ than the weak views at each iteration. Note that we can regard $\mu$ as the update rate of the weight. As the value of $|\mu-1|$ increases, the difference between the weights before and after updating also increase, which will result in faster convergence.
But too fast convergence may lead to failure to learn the optimal weights.
In our experiment, we set $\mu=1.1$ for strong views and $\mu$=$0.9$ for weak views, and our experiment results show the effectiveness of this setting. Therefore, \emph{after each iteration, the weights of strong views will increase, while the weights of weak views will decrease. We call this phenomenon ``strong views eat the weights of weak views''.} Note that we remove dying views rather than weak views in Eq.~\eqref{w_{v}}. Although the weights of weak views will decrease due to Eq.~\eqref{gengw}, we still preserve these weak views for clustering.

Combining Eq.~\eqref{jm}, Eq.~\eqref{F} and Eq.~\eqref{w_{v}}, the final model of our weighted MVSC is as follows:
\begin{align}\label{jwxin}
&\min\limits_{\bm{A}^{(v)},\bm{E}^{(v)},\bm{F^{(v)}},\bm{F}^{\ast}}\sum_v(w_v||\bm{X}^{(v)}-\bm{X}^{(v)}\bm{A}^{(v)}-\bm{E}^{(v)}||_F^2\nonumber\\
&+\beta||\bm{E}^{(v)}||_{1}+\eta\text{Tr}(\bm{F}^{(v)^T}\bm{M}^{(v)^T}\bm{L}_A^{(v)}\bm{M}^{(v)}\bm{F}^{(v)})\nonumber\\
&+\alpha||\frac{\bm{F}^{(v)}\bm{F}^{(v)^T}}{||\bm{F}^{(v)}\bm{F}^{(v)^T}||_F^{k}}-\frac{\bm{F}^{\ast}\bm{F}^{\ast^T}}{||\bm{F}^{\ast}\bm{F}^{\ast^T}||_F^{k}}||_F^2)  \\
&\mbox{s.t. }\bm{F}^{(v)^T}\bm{F}^{(v)}=\bm{I},\bm{F}^{\ast^T}\bm{F}^{\ast}=\bm{I},\bm{A}^{(v)}\mathbf{1}=1,\bm{A}_{i,i}^{(v)}=0,\nonumber
\end{align}
where $\bm{A}^{(v)}\in \mathbb{R}^{k_{v} \times k_v}$.
Note that Eq.~\eqref{jwxin} only integrates different views structurally, and it is difficult to recover the incomplete data because no related technology is used. When a dataset has high incompleteness and several noises, Eq.~\eqref{jwxin} may obtain unsatisfactory clustering performance. In the next section, we will design the low-rank and robust representation model to recover and denoise the incomplete data.


\subsection{Low-rank Representation for Data Recovery}
For a real-world incomplete dataset, the low-rank representation is an effective method to recover the underlying true low-rank data matrix.
The low-rank representation of the subspace is widely used in many subspace-based clustering methods to improve the clustering performance~\cite{zhang2017latent,6589139,7956295,8789678}. Most of these methods assume that the nuclear norm of the coefficient matrix ($||\cdot||_{\ast}$) is balanced to the rank of the coefficient matrix (rank$(\cdot)$). However, the results obtained with this assumption may deviate significantly from the truth~\cite{kang2015robust}.
Therefore, a reasonable low-rank representation of subspace can improve the clustering performance.
$\gamma$-norm~\cite{kang2015robust} is a state-of-the-art nonconvex low-rank representation method, which can get more accurate results than the nuclear norm used by most clustering methods. For any matrix $\bm{B}$, its $\gamma$-norm is defined as:
\begin{align}\label{yuangammafyuan}
||\bm{B}||_{\gamma}=\sum_{i}\frac{(1+\gamma)\epsilon_{B_i}}{\gamma+\epsilon_{B_i}},
\end{align}
where $\gamma$ is a penalty parameter and $\epsilon_{B_i}$ is the $i$-th singular value of $\bm{B}$. We can find that $\lim\limits_{\gamma\rightarrow 0}=$rank($\bm{B}$) and $\lim\limits_{\gamma\rightarrow \infty}=||\bm{B}||_{\ast}$.

However, $\gamma$-norm can only handle single view data. Moreover, we only hope that $\gamma$-norm of $\bm{B}$ tends to rank($\bm{B}$) in most cases.
To obtain a satisfactory low-rank representation, we design the $\Gamma$-norm $||\bm{A}^{(v)}||_G$ of $\bm{A}^{(v)}$ as follows:
%
%
%
%
\begin{align}\label{gfyuan}
||\bm{A}^{(v)}||_G=\sum_{i=1}^{k_{v}}\frac{\epsilon_i^{(v)}}{\epsilon_i^{(v)}+\gamma},
\end{align}
where $\epsilon_i^{(v)}$ is the $i$-th singular value of $\bm{A}^{(v)}$.
Note that if $\gamma\rightarrow 0$, $||\bm{A}^{(v)}||_G\rightarrow$rank$(\bm{A}^{(v)})$, and we choose a small $\gamma$ (e.g., $\gamma=0.001$).
Fig.~\ref{fig:fanshu} compares the rank approximation performance of the nuclear norm and our designed $\Gamma$-norm with different $\gamma$. Obviously, $\Gamma$-norm with $\gamma=0.001$ is closer to the true rank than the nuclear norm.
\begin{figure}[t]
\centering
\includegraphics[width=0.45\textwidth]{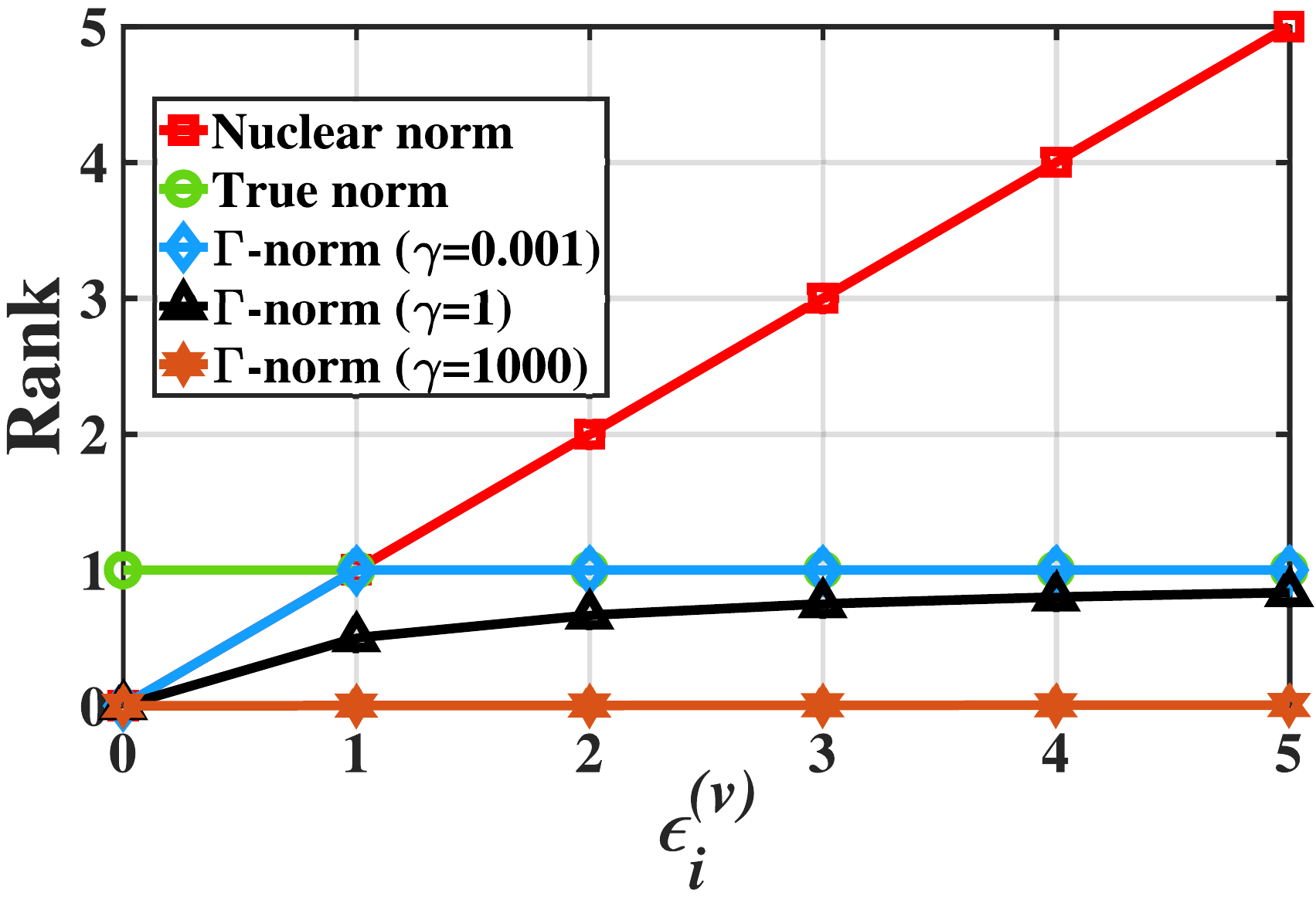}
\caption{The performance of different functions to the rank w.r.t.
a varying positive singular value $\epsilon_i^{(v)}$ (True rank is 1).}
\label{fig:fanshu}
\end{figure}


To learn more satisfactory low-rank representation, we
replace $L_1$-norm in Eq.~\eqref{jwxin} with a relaxed regularization $L_{2,1}$-norm that is robust to noises. $L_{2,1}$-norm of $\bm{E}^{(v)}$ is defined as:
\begin{align}\label{l21_e}
||\bm{E}^{(v)}||_{2,1}=\sum_{i=1}^{d_v}\sqrt{\sum_{j=1}^{k_v}|\bm{E}_{ij}^{(v)}}|^2.
\end{align}
Combining Eq.~\eqref{gfyuan} and Eq.~\eqref{l21_e}, the low-rank representation is formulated as:
\begin{align}\label{l21_xin}
\min\limits_{\bm{A}^{(v)},\bm{E}^{(v)}}||\bm{A}^{(v)}||_G+\beta||\bm{E}^{(v)}||_{2,1}.
\end{align}
Note that $\bm{A}^{(v)}\in \mathbb{R}^{k_{v} \times k_v}$ and $\bm{E}^{(v)} \in \mathbb{R}^{d_{v} \times k_v}$. Therefore, the more instances of a view are presented, the more instance information is extracted by Eq.~\eqref{l21_xin}.
\subsection{Objective Function}
Integrating weighted MVSC (i.e., Eq.~(\ref{jwxin})) and low-rank and robust representation (i.e., Eq.~(\ref{l21_xin})), we can obtain final objective function:
\begin{align}\label{j}
&\min\limits_{\bm{A}^{(v)},\bm{E}^{(v)},\bm{F^{(v)}},\bm{F}^{\ast}}\sum_{v}(||\bm{A}^{(v)}||_G+w_v\theta||\bm{X}^{(v)}-\bm{X}^{(v)}\bm{A}^{(v)}\nonumber\\
-&\bm{E}^{(v)}||_F^2 +\eta\text{Tr}(\bm{F}^{(v)^T}\bm{M}^{(v)^T}\bm{L}_A^{(v)}\bm{M}^{(v)}\bm{F}^{(v)})\\
+&\alpha||\frac{\bm{F}^{(v)}\bm{F}^{(v)^T}}{||\bm{F}^{(v)}\bm{F}^{(v)^T}||_F^{k}}-\frac{\bm{F}^{\ast}\bm{F}^{\ast^T}}{||\bm{F}^{\ast}\bm{F}^{\ast^T}||_F^{k}}||_F^2+\beta||\bm{E}^{(v)}||_{2,1}) \nonumber\\
\mbox{s.t. }&\bm{F}^{(v)^T}\bm{F}^{(v)}=\bm{I},\bm{F}^{\ast^T}\bm{F}^{\ast}=I,\bm{A}^{(v)}\mathbf{1}=1,\bm{A}_{i,i}^{(v)}=0,\nonumber
\end{align}
where $\alpha,\beta$ and $\eta$ are nonnegative parameters, which are used to control the trade-off between two objectives. $\theta$ is a penalty parameter that needs to be tuned in the  optimization (i.e., Section \ref{subsection:opt}).
Note that Eq.~\eqref{j} is a nonconvex function, which is often difficult to optimize. In the next section, we will design an iteration procedure to optimize it.
\subsection{Optimization of Evolution}\label{subsection:opt}
Inspired by Alternating Direction Method of Multipliers (ADMM)~\cite{boyd2011distributed,brbic2018multi},
we introduce two auxiliary variables ($\bm{Q_1}^{(v)}$ and $\bm{Q_2}^{(v)}$) to reformulate Eq.~\eqref{j}
\begin{align}\label{j1}
&\sum\limits_{v}(||\bm{Q_1}^{(v)}||_G+\alpha ||\frac{\bm{F}^{(v)}\bm{F}^{(v)^T}}{||\bm{F}^{(v)}\bm{F}^{(v)^T}||_F^{k}}-\frac{\bm{F}^{\ast}\bm{F}^{\ast^T}}{||\bm{F}^{\ast}\bm{F}^{\ast^T}||_F^{k}}||_F^2\nonumber\\
&+w_v\theta||\bm{X}^{(v)}-\bm{X}^{(v)}\bm{A}^{(v)}-\bm{E}^{(v)}||_F^2+\beta||\bm{E}^{(v)}||_{2,1}\nonumber\\
&+\eta\text{Tr}(\bm{F}^{(v)^T}\bm{M}^{(v)^T}\bm{L}_{Q_2}^{(v)}\bm{M}^{(v)}\bm{F}^{(v)}))\nonumber\\
&\mbox{s.t. }\bm{F}^{(v)}\bm{F}^{(v)^T}=\bm{I},\bm{F}^{\ast}\bm{F}^{\ast^T}=\bm{I},\bm{A}^{(v)}=\bm{Q_1}^{(v)},\nonumber\\
&\bm{A}^{(v)}=\bm{Q_2}^{(v)},{\bm{Q_2}}^{(v)}\mathbf{1}=1,{\bm{Q_2}}_{i,i}^{(v)}=0,
\end{align}
where $\bm{Q_1}^{(v)}$ and $\bm{Q_2}^{(v)}$ are served as the surrogates of $\bm{A}^{(v)}$ to simplify the optimization of Eq.~\eqref{j}.
Therefore, the augmented Lagrangian function of Eq.~\eqref{j1} is formulated as follows:
\begin{align}\label{j2}
&J=\sum\limits_{v}(||\bm{Q_1}^{(v)}||_G+\alpha ||\frac{\bm{F}^{(v)}\bm{F}^{(v)^T}}{||\bm{F}^{(v)}\bm{F}^{(v)^T}||_F^{k}}-\frac{\bm{F}^{\ast}\bm{F}^{\ast^T}}{||\bm{F}^{\ast}\bm{F}^{\ast^T}||_F^{k}}||_F^2\nonumber\\
&+\beta||\bm{E}^{(v)}||_{2,1}+\eta\text{Tr}(\bm{F}^{(v)^T}\bm{M}^{(v)^T}\bm{L}_{Q_2}^{(v)}\bm{M}^{(v)}\bm{F}^{(v)}))+\vartheta^{(v)^T}(1\nonumber\\
&-{\bm{Q_2}}^{(v)}\mathbf{1})+\frac{w_v\theta}{2}(||\bm{X}^{(v)}-\bm{X}^{(v)}\bm{A}^{(v)}-\bm{E}^{(v)}+\frac{\bm{H_0}^{(v)}}{w_v\theta}||_F^2\nonumber\\
&+||\bm{A}^{(v)}-\bm{Q_1}^{(v)}+\frac{\bm{H_1}^{(v)}}{w_v\theta}||_F^2+||\bm{A}^{(v)}-\bm{Q_2}^{(v)}+\frac{\bm{H_2}^{(v)}}{w_v\theta}||_F^2),
\end{align}
where $\bm{H_0}^{(v)}$, $\bm{H_1}^{(v)}$, $\bm{H_2}^{(v)}$ and $\vartheta^{(v)}$ are Lagrange multipliers.
Note that Eq.~\eqref{j2} is not convex for all variables simultaneously, and it is difficult to solve Eq.~\eqref{j2} in one step. Thus, we design the following nine-step procedure to update each variable iteratively~\cite{kang2015robust}.

\noindent\textbf{Step 1.} Updating $\bm{F}^{(v)}$.
Fixing the other variables, for the update of $\bm{F}^{(v)}$, we need to minimize
\begin{align}\label{jfyuan}
J(\bm{F}^{(v)})&= ||\frac{\bm{F}^{(v)}\bm{F}^{(v)^T}}{||\bm{F}^{(v)}\bm{F}^{(v)^T}||_F^{k}}-\frac{\bm{F}^{\ast}\bm{F}^{\ast^T}}{||\bm{F}^{\ast}\bm{F}^{\ast^T}||_F^{k}}||_F^2\nonumber\\
&+\eta\text{Tr}(\bm{F}^{(v)^T}\bm{M}^{(v)^T}\bm{L}_{Q_2}^{(v)}\bm{M}^{(v)}\bm{F}^{(v)}).
\end{align}
Since $||\bm{F}^{(v)}\bm{F}^{(v)^T}||_F^2=||\bm{F}^{\ast}\bm{F}^{\ast^T}||_F^2=c$, we can rewrite Eq.~\eqref{jfyuan} as
\begin{align}\label{jfzhongjian}
J(\bm{F}^{(v)})&=\frac{2\alpha( c-\text{Tr}(\bm{F}^{\ast}\bm{F}^{\ast^T}\bm{F}^{(v)}\bm{F}^{(v)^T}))}{c^{k}}\nonumber\\
&+\eta\text{Tr}(\bm{F}^{(v)^T}\bm{M}^{(v)^T}\bm{L}_{Q_2}^{(v)}\bm{M}^{(v)}\bm{F}^{(v)}),
\end{align}
Since $c$ and $k$ are constant, we can ignore them for simplicity. Thus, we also rewrite Eq.~\eqref{jfzhongjian} as
\begin{align}\label{jf}
&J(\bm{F}^{(v)})=\eta\text{Tr}(\bm{F}^{(v)^T}\bm{Y}^{(v)}\bm{F}^{(v)})-\alpha\text{Tr}(\bm{O}\bm{F}^{(v)}\bm{F}^{(v)^T})  \nonumber\\
\Longrightarrow &J(\bm{F}^{(v)})=\text{Tr}(\bm{F}^{(v)^T}(\eta \bm{Y}^{(v)}-\alpha\bm{O})\bm{F}^{(v)}),
\end{align}
where $\bm{Y}^{(v)}=\bm{M}^{(v)^T}\bm{L}_{Q_2}^{(v)}\bm{M}^{(v)}$ and $\bm{O}=\bm{F}^{\ast}\bm{F}^{\ast^T}$. Eq.~\eqref{jf} is a standard spectral clustering object with graph Laplacian matrix ($\eta\bm{Y}^{(v)}-\alpha \bm{O}^{(v)}$). By calculating the $c$ eigenvectors of Laplacian matrix corresponding to the $c$ smallest eigenvalues, we can obtain the optimal solution $\bm{F}^{(v)}$.

\noindent\textbf{Step 2.} Updating $\bm{A}^{(v)}$.
%
Similar to $\bm{F}^{(v)}$, we have
\begin{align}\label{jzpian}
\bm{A}^{(v)}&=(\bm{X}^{(v)^T}\bm{X}^{(v)}+2\bm{I})^{-1}(\bm{X}^{(v)^T}(\bm{X}^{(v)}+\frac{\bm{H_0}^{(v)}}{w_v\theta}-\bm{E}^{(v)}) \nonumber\\
&+\bm{Q_1}^{(v)}+\bm{Q_2}^{(v)}-\frac{\bm{H_1}^{(v)}}{w_v\theta}-\frac{\bm{H_2}^{(v)}}{w_v\theta}).
\end{align}
Since $(\bm{X}^{(v)^T}\bm{X}^{(v)}+2\bm{I})^{-1}$ is constant, we can pre-compute it before our optimization to reduce the computational cost.

\noindent\textbf{Step 3.} Updating $\bm{Q_1}^{(v)}$.
Fixing the other variables, we update $\bm{Q_1}^{(v)}$ by solving
\begin{align}\label{jq1}
\bm{Q_1}^{(v)}=\arg\min\limits_{\bm{Q_1}^{(v)}}||\bm{Q_1}^{(v)}||_G+\frac{w_v\theta}{2}||\bm{Q_1}^{(v)}-\bm{T}^{(v)}||_F^2,
\end{align}
where $\bm{T}^{(v)}$=$\bm{A}^{(v)}+\bm{H_1}^{(v)}/({w_v\theta})$.
Note that $||\bm{Q_1}^{(v)}||_G$ is the nonconvex surrogate of rank$(\bm{Q_1}^{(v)})$, and the subproblem of updating $\bm{Q_1}^{(v)}$ is a nonconvex function. Thus, it is difficult for us to directly obtain a closed-form solution to this problem. Fortunately, based on the Moreau-Yosida regularization technology and difference of convex (DC) programming \cite{tao1997convex}, we can decompose Eq.~\eqref{jq1} as the difference of two convex functions to obtain the closed-form solution as follows.

At the $(t+1)$ th iteration ($t>0$), we update $\bm{Q_1}^{(v)^{t+1}}$ by solving the
following subproblem:
\begin{align}\label{jq1}
\bm{Q_1}^{(v)^{t+1}}=\arg\min\limits_{\bm{Q_1}^{(v)^t}}||\bm{Q_1}^{(v)^t}||_G+\frac{w_v^t\theta}{2}||\bm{Q_1}^{(v)^t}-\bm{T}^{(v)^t}||_F^2.
\end{align}
To solve Eq.~\eqref{jq1}, we first develop the following theorem and the corresponding proof.
\begin{theorem}\label{mfanshudingli}
Let $\bm{T}=\bm{U}\bm{\Sigma}_T\bm{V}^T$ be the SVD of $\bm{T}$, where $\bm{\Sigma}_T=diag(\sigma_T)$. Set $H(\bm{Q_1}^{(v)})=||\bm{Q_1}^{(v)^t}||_G=h\circ\sigma_Q$
be a unitarily invariant function.
\begin{align}\label{hm}
\min\limits_{\bm{Q_1}^{(v)}}H(\bm{Q_1}^{(v)})+\frac{\omega}{2}||\bm{Q_1}^{(v)}-\bm{A}||_F^2.
\end{align}
Then an optimal solution to the following problem is $\bm{Q}^{\ast}=\bm{U}\bm{\Sigma}_Q^{\ast}\bm{V}^T$, where $\bm{\Sigma}_Q^{\ast}=diag(\sigma^{\ast})$ and $\sigma^{\ast}=\text{prox}_{h,\omega}(\sigma_T)$. $\text{prox}_{h,\omega}(\sigma_T)$ is the \emph{Moreau-Yosida operator}, defined as
\begin{align}\label{prox}
\text{prox}_{h,\omega}(\sigma_T)=\arg\min\limits_{\sigma}h(\sigma)+\frac{\omega}{2}||\sigma-\sigma_T||_2^2.
\end{align}
\end{theorem}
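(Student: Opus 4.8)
The plan is to reduce the matrix minimization in Eq.~\eqref{hm} to a scalar minimization over the vector of singular values, following the rank-approximation analysis of \cite{kang2015robust}. First I would expand the quadratic term as $\|\bm{Q_1}^{(v)}-\bm{T}\|_F^2=\|\bm{Q_1}^{(v)}\|_F^2-2\,\text{Tr}(\bm{Q_1}^{(v)^T}\bm{T})+\|\bm{T}\|_F^2$. Since $H$ is unitarily invariant, $H(\bm{Q_1}^{(v)})=h(\sigma_Q)$ depends only on the singular values $\sigma_Q$ of $\bm{Q_1}^{(v)}$, and $\|\bm{Q_1}^{(v)}\|_F^2=\|\sigma_Q\|_2^2$, $\|\bm{T}\|_F^2=\|\sigma_T\|_2^2$ are likewise independent of the singular vectors of $\bm{Q_1}^{(v)}$. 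Consequently, once $\sigma_Q$ is fixed, the only term in the objective that still depends on the left/right singular vectors of $\bm{Q_1}^{(v)}$ is the cross term $-\omega\,\text{Tr}(\bm{Q_1}^{(v)^T}\bm{T})$.

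The heart of the argument is von Neumann's trace inequality: for matrices with singular values $\sigma_Q$ and $\sigma_T$ arranged in nonincreasing order, $\text{Tr}(\bm{Q_1}^{(v)^T}\bm{T})\le\sigma_Q^{T}\sigma_T$, with equality exactly when $\bm{Q_1}^{(v)}$ and $\bm{T}$ share a simultaneous SVD, i.e., $\bm{Q_1}^{(v)}=\bm{U}\,\mathrm{diag}(\sigma_Q)\,\bm{V}^{T}$ with the $\bm{U},\bm{V}$ of the given decomposition $\bm{T}=\bm{U}\bm{\Sigma}_T\bm{V}^{T}$. Minimizing over singular vectors for fixed $\sigma_Q$ therefore selects this aligned matrix, after which the objective collapses, using $\|\sigma_Q\|_2^2-2\sigma_Q^{T}\sigma_T+\|\sigma_T\|_2^2=\|\sigma_Q-\sigma_T\|_2^2$, to $h(\sigma_Q)+\frac{\omega}{2}\|\sigma_Q-\sigma_T\|_2^2$; here I use that $h$ is a symmetric function of its arguments, which holds because $\|\cdot\|_G=\sum_i\epsilon_i/(\epsilon_i+\gamma)$ is invariant under permutations of the singular values. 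Minimizing the reduced objective over $\sigma_Q\ge 0$ is, by definition, $\text{prox}_{h,\omega}(\sigma_T)$ of Eq.~\eqref{prox}, so $\sigma^{\ast}=\text{prox}_{h,\omega}(\sigma_T)$ and the minimizer is $\bm{Q}^{\ast}=\bm{U}\,\mathrm{diag}(\sigma^{\ast})\,\bm{V}^{T}$, as claimed. I would close by noting that, because $h$ is concave, the scalar problem defining $\text{prox}_{h,\omega}$ is a difference-of-convex program admitting a closed-form solution via the Moreau--Yosida/DC technique cited above, which is what makes the update \eqref{jq1} usable in practice.

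The step I expect to be the main obstacle is the equality condition in von Neumann's inequality: one has to argue not only that alignment of singular vectors achieves the bound but that restricting to the aligned form loses no generality, which requires some care with repeated singular values and with the ordering/sign conventions so that the recovered $\sigma^{\ast}$ is a valid (nonnegative, correctly ordered) singular-value vector. A lesser point is to confirm that $\text{prox}_{h,\omega}$ is well defined, i.e., that the infimum in Eq.~\eqref{prox} is attained even though $h$ is nonconvex; this follows from coercivity of the quadratic penalty together with lower semicontinuity of $h$, and the rest is bookkeeping.
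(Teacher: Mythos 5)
Your proposal is correct and follows essentially the same route as the paper: both reduce the matrix problem \eqref{hm} to the scalar problem $\min_{\sigma} h(\sigma)+\frac{\omega}{2}\|\sigma-\sigma_T\|_2^2$ by fixing the singular values, showing the optimum is attained at a matrix sharing the SVD factors $\bm{U},\bm{V}$ of $\bm{T}$, and reading off $\sigma^{\ast}=\text{prox}_{h,\omega}(\sigma_T)$. The only nominal difference is that you invoke von Neumann's trace inequality where the paper applies the Hoffman--Wielandt inequality to $\|\bm{M}^{(v)}-\bm{\Sigma}_T\|_F$ after unitary conjugation -- these are equivalent once the square is expanded, exactly as your identity $\|\sigma_Q\|_2^2-2\sigma_Q^{T}\sigma_T+\|\sigma_T\|_2^2=\|\sigma_Q-\sigma_T\|_2^2$ shows -- and your additional attention to the equality conditions, the ordering/nonnegativity of $\sigma^{\ast}$, and the attainment of the infimum in \eqref{prox} addresses points the paper's proof leaves implicit.
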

\begin{proof}\label{mfanshuzhengming}
Since $\bm{T}=\bm{U}\bm{\Sigma}_T\bm{V}^T$, then $\bm{\Sigma}_T=\bm{U}^T\bm{T}\bm{V}$. Denoting $\bm{M}^{(v)}=\bm{U}^{(v)^T}\bm{Q_1}^{(v)}\bm{V}^{(v)}$ which has exactly the same singular values as $\bm{Q_1}^{(v)}$, we have
\begin{subequations}
\begin{align}
&H(\bm{Q_1}^{(v)})+\frac{\omega}{2}||\bm{Q_1}^{(v)}-\bm{T}||_F^2\label{Z0}\\
=&H(\bm{M}^{(v)})+\frac{\omega}{2}||\bm{M}^{(v)}-\bm{\Sigma}_T||_F^2,\label{ZA}\\
\geq&H(\bm{\Sigma}_M^{(v)})+\frac{\omega}{2}||\bm{\Sigma}_M^{(v)}-\bm{\Sigma}_T||_F^2,\label{ZB}\\
=&H(\bm{\Sigma}_{Q_1}^{(v)})+\frac{\omega}{2}||\bm{\Sigma}_{Q_1}^{(v)}-\bm{\Sigma}_T||_F^2 ,\label{ZC}\\
=&h(\sigma)+\frac{\omega}{2}||\sigma-\sigma_T||_2^2,\label{ZD}\\
\geq&h(\sigma^{\ast})+\frac{\omega}{2}||\sigma^{\ast}-\sigma_T||_2^2.\nonumber
\end{align}
\end{subequations}
Note that Eq.~\eqref{ZA} holds since the Frobenius norm is unitarily invariant; Eq.~\eqref{ZB} is due to the Hoffman-Wielandt inequality; and Eq.~\eqref{ZC} holds as $\bm{\Sigma}_{Q_1}^{(v)}=\bm{\Sigma}_M$. Thus, Eq.~\eqref{ZC} is a lower bound
of Eq.~\eqref{Z0}. Because $\bm{\Sigma}_M^{(v)}=\bm{\Sigma}_{Q_1}^{(v)}=\bm{Q_1}^{(v)}=\bm{U}^{(v)^T}\bm{M}^{(v)}\bm{V}^{(v)}$, the SVD of $\bm{M}^{(v)}$ is $\bm{M}^{(v)}=\bm{U}^{(v)^T}\bm{\Sigma}_M^{(v)}\bm{V}^{(v)}$. By minimizing Eq.~\eqref{ZD}, we get $\sigma^{\ast}$. Hence $\bm{M}^{\ast}=\bm{U}diag(\sigma^{\ast})\bm{V}^T$, which is the optimal solution of Eq.~\eqref{hm}. Thus, Theorem~\ref{mfanshudingli} is proved.
\end{proof}
Note that $||\bm{Q_1}^{(v)}||_G$ is the nonconvex surrogate of rank$(\bm{Q_1}^{(v)})$, and the subproblem of updating $\bm{Q_1}^{(v)}$ is a nonconvex function. Thus, it is difficult for us to directly obtain a closed-form solution to this problem. Fortunately, based on the Moreau-Yosida regularization technology and difference of convex (DC) programming \cite{tao1997convex}, we can decompose Eq.~\eqref{jq1} as the difference of two convex functions and iteratively optimizes it by linearizing the concave term at each iteration. For the $(t+1)$ th iteration, we have
\begin{align}\label{ji}
\sigma^{t+1}=\arg\min h(\sigma^t)+\frac{\omega^t}{2}||\sigma^t-\sigma_T^t||_2^2,
\end{align}
which admits a closed-form solution~\cite{6574874}
\begin{align}\label{bishijie}
\sigma^{t+1}=(\sigma_T^t-\frac{\varphi_t}{\omega^t})_+,
\end{align}
where $\varphi_t=\partial h(\sigma^t)$ is the gradient of $h(\cdot)$ at $\sigma^t$, and $\bm{U}^{(v)}diag(\sigma_T^t)^{(v)}\bm{V}^{(v)^T}$ is the SVD of ($\bm{A}^{(v)}+\bm{H_1}^{(v)}/({w_v\theta})$). After several iterations, it converges to a locally optimal point $\sigma^{\ast}$. Then $\bm{Q_1}^{(v)^{t+1}}=\bm{U}^{(v)}diag(\sigma^{(v)^{\ast}})\bm{V}^{(v)^T}$.

\noindent\textbf{Step 4.} Updating $\bm{Q_2}^{(v)}$.
Similar to $\bm{F}^{(v)}$ and \cite{gao2015multi}, we can update $\bm{Q_2}^{(v)}$ as follows:
\begin{align}\label{q2jie}
{\bm{Q_2}}_{i,j}^{(v)}=\! \left\{ \begin{array}{ll}
                     \bm{R}_{i,:}^{(v)}-\frac{\eta}{2w_v\theta}\bm{S}_{i,:}^{(v)}, & \textrm{if }\bm{R}_{i,:}^{(v)}\geq\frac{\eta}{2w_v\theta}\bm{S}_{i,:}^{(v)}; \\
                     \bm{R}_{i,:}^{(v)}+\frac{\eta}{2w_v\theta}\bm{S}_{i,:}^{(v)},  & \textrm{if }\bm{R}_{i,:}^{(v)}<\frac{\eta}{2w_v\theta}\bm{S}_{i,:}^{(v)},
                   \end{array}
  \right.
\end{align}
where $\bm{S}_{i,j}^{(v)}$=$||\bm{G}_{i,:}^{(v)}-\bm{G}_{j,:}^{(v)}||_2^2$, $\bm{G}_{i,:}^{(v)}$ and $\bm{G}_{j,:}^{(v)}$ is the $i$-th and $j$-th row vectors of $\bm{G}^{(v)}$, respectively. $\bm{R}_{i,:}^{(v)}$=$\bm{A}_{i,:}^{(v)}+{\bm{H_2}}_{i,:}^{(v)}/({w_v\theta})$ and $\bm{G}^{(v)}$=$\bm{M}^{(v)}\bm{F}^{(v)}$.

\noindent\textbf{Step 5.} Updating $\vartheta^{(v)}$.
Due to ${\bm{Q_2}}_{i,:}^{(v)}\mathbf{1}$=$1$ with Eq.~\eqref{q2jie}, the Lagrange multiplier can be updated as follows:
\begin{align}\label{q2chengzi}
\vartheta_i^{(v)}=\frac{w_v\theta(\sum_{i,j}({\bm{Q_2}}_{i,j}^{(v)}-\frac{{\bm{H_2}}_{i,:}^{(v)}}{w_v\theta})-1)}{1-n_v}.
\end{align}
\noindent\textbf{Step 6.} Updating $\bm{E}^{(v)}$.
%
Similar to $\bm{F}^{(v)}$, we update $\bm{E}^{(v)}$ as follows:
\begin{align}\label{ejie}
\bm{E}^{(v)}=(\frac{\beta}{w_v\theta}\bm{D}^{(v)}+\bm{I})^{-1}\bm{K}^{(v)},
\end{align}
where $\bm{D}^{(v)}$=$\partial||\bm{E}^{(v)}||_{2,1}/{\partial \bm{E}^{(v)}}\bm{E}^{(v)^{-1}}$, $\bm{K}^{(v)}$=$\bm{X}^{(v)}-\bm{X}^{(v)}\bm{A}^{(v)}+\bm{H_0}^{(v)}/({w_v\theta})$.
%

\noindent\textbf{Step 7.} Updating $\bm{F}^{\ast}$.
Similar to $\bm{F}^{(v)}$, we minimize the following objective function:
\begin{align}\label{jfxing}
&-\alpha\text{Tr}(\bm{F}^{\ast}\bm{F}^{\ast^T}\bm{F}^{(v)}\bm{F}^{(v)^T})
\nonumber\\
\Longrightarrow &\text{Tr}(\bm{F}^{\ast^T}(-\bm{F}^{(v)}\bm{F}^{(v)^T})\bm{F}^{\ast}).
\end{align}
We can obtain the optimal solution $\bm{F}^{(v)}$ by calculating the $c$ eigenvectors of $\sum_v(\bm{F}^{(v)}\bm{F}^{(v)^T})$ corresponding to the $c$ largest eigenvalues.

\noindent\textbf{Step 8.} Updating $\bm{H_0}^{(v)},\bm{H_1}^{(v)}$, $\bm{H_2}^{(v)}$. Similar to $\bm{F}^{(v)}$, we update $\bm{H_0}^{(v)},\bm{H_1}^{(v)}$ and $\bm{H_2}^{(v)}$ by
\begin{align}\label{gengh}
\bm{H_0}^{(v)}&=\bm{H_0}^{(v)}+w_v\theta(\bm{X}^{(v)}-\bm{X}^{(v)}\bm{A}^{(v)}-\bm{E}^{(v)})\nonumber\\
\bm{H_1}^{(v)}&=\bm{H_1}^{(v)}+w_v\theta(\bm{A}^{(v)}-\bm{Q_1}^{(v)})\\
\bm{H_2}^{(v)}&=\bm{H_2}^{(v)}+w_v\theta(\bm{A}^{(v)}-\bm{Q_2}^{(v)})\nonumber
\end{align}
\noindent\textbf{Step 9.} Updating $w_v$ and $\theta$.
Fixing the other variables, we update $w_v$ by Eq.~\eqref{gengw}. Similarly, we update $\theta$ by
\begin{align}\label{gengseita}
\theta=\min(\phi\theta,{\theta}_{\max}),
\end{align}
where $\phi$ and ${\theta}_{\max}$ are constant. We set $\phi=1.1$ and ${\theta}_{\max}=10^7$ in our experiments.
\subsection{Convergence Analysis}
Note that the optimization of UIMC is simplified as nine steps. Each step can obtain the closed solution w.r.t. the corresponding variable. The objective function is bounded and all the above nine steps do not increase the objective function value.
For Step 1, Step 2, and Step 4-9, we can easily find their objective functions are bounded and these steps do not increase the objective function value. Therefore, we try to analyze the impact of Step 3 on the objective function in our submission.

For convenience, we write $||\bm{Q_1}^{(v)}||_{\Gamma}$ as $H(\bm{Q_1}^{(v)})$ in Eq. (16) in our submission.
\begin{align}\label{gaimubiaohanshu}
&J(\bm{Q_1}^{(v)},\bm{A}^{(v)},\bm{H_1}^{(v)},w_v)=\sum\limits_{v}(H(\bm{Q_1}^{(v)})\\
&+\frac{w_v\theta}{2}||\bm{A}^{(v)}-\bm{Q_1}^{(v)}||_F^2+<\frac{\bm{H_1}^{(v)}}{w_v\theta},\bm{A}^{(v)}-\bm{Q_1}^{(v)}>),\nonumber
\end{align}
where $<\cdot,\cdot>$ is the inner product of two matrices~\cite{8543490}.
\begin{lemma}\label{bianhuajia}
$\bm{Q_1}^{(v)^t}$ is bounded.
\end{lemma}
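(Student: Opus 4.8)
The plan is to show that the iterates $\bm{Q_1}^{(v)^t}$ stay in a bounded set by exploiting the fact that the sequence of objective values $J^t := J(\bm{Q_1}^{(v)^t},\bm{A}^{(v)^t},\bm{H_1}^{(v)^t},w_v^t)$ is monotonically nonincreasing (as asserted in the convergence analysis: none of the nine update steps increases the objective) and bounded below. First I would establish that $J^t \le J^0 < +\infty$ for every $t$, so every individual summand contributing to $J^t$ is controlled. Then I would isolate the terms of Eq.~\eqref{gaimubiaohanshu} involving $\bm{Q_1}^{(v)}$ and complete the square: combining $\frac{w_v\theta}{2}\|\bm{A}^{(v)}-\bm{Q_1}^{(v)}\|_F^2$ with the inner product $\langle \bm{H_1}^{(v)}/(w_v\theta), \bm{A}^{(v)}-\bm{Q_1}^{(v)}\rangle$ gives $\frac{w_v\theta}{2}\|\bm{A}^{(v)}-\bm{Q_1}^{(v)}+\bm{H_1}^{(v)}/(w_v\theta)\|_F^2$ up to an additive term $-\frac{1}{2w_v\theta}\|\bm{H_1}^{(v)}\|_F^2$ that does not depend on $\bm{Q_1}^{(v)}$. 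So, modulo terms bounded independently of $\bm{Q_1}^{(v)^t}$, the quantity $H(\bm{Q_1}^{(v)^t}) + \frac{w_v^t\theta}{2}\|\bm{A}^{(v)^t}-\bm{Q_1}^{(v)^t}+\bm{H_1}^{(v)^t}/(w_v^t\theta)\|_F^2$ is bounded above by a constant.

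Next I would argue nonnegativity of both pieces. The function $H(\bm{Q_1}^{(v)}) = \|\bm{Q_1}^{(v)}\|_G = \sum_i \epsilon_i/(\epsilon_i+\gamma)$ is a sum of nonnegative terms (all singular values $\epsilon_i\ge 0$ and $\gamma>0$), hence $H\ge 0$; likewise the squared Frobenius norm is nonnegative. Since their sum is bounded above by a constant, each is individually bounded: in particular $\|\bm{A}^{(v)^t}-\bm{Q_1}^{(v)^t}+\bm{H_1}^{(v)^t}/(w_v^t\theta)\|_F^2 \le \text{const}$. Provided the companion quantities $\bm{A}^{(v)^t}$ and $\bm{H_1}^{(v)^t}/(w_v^t\theta)$ are themselves bounded — which follows from the update rules (Step~2 writes $\bm{A}^{(v)}$ as a fixed linear map applied to bounded quantities, and the multiplier update of Step~8 together with the boundedness of the primal residuals keeps $\bm{H_1}^{(v)}$ bounded, a fact one can establish by the same telescoping-of-objective-values argument) — the triangle inequality then yields $\|\bm{Q_1}^{(v)^t}\|_F \le \text{const}$, which is the claim. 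An alternative, cleaner route if one wants to avoid auxiliary bounds: note directly that $H(\bm{Q_1}^{(v)^t}) = \sum_i \epsilon_i^t/(\epsilon_i^t+\gamma)$, where $\epsilon_i^t$ are the singular values of $\bm{Q_1}^{(v)^t}$, and observe that $H$ being bounded does \emph{not} by itself bound the singular values (each term saturates at $1$), so the square term really is needed — this is worth stating to motivate why the coupling with $\bm{A}^{(v)^t}$ enters.

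The main obstacle I anticipate is the circularity: bounding $\bm{Q_1}^{(v)^t}$ via the square term presupposes $\bm{A}^{(v)^t}$ and $\bm{H_1}^{(v)^t}$ are bounded, and the boundedness of $\bm{H_1}^{(v)^t}$ is usually the delicate point in ADMM-type convergence proofs (it typically requires either a subgradient-boundedness/optimality-condition argument at the $\bm{Q_1}$-subproblem to express $\bm{H_1}^{(v)^{t+1}}$ in terms of $\partial H(\bm{Q_1}^{(v)^{t+1}})$, whose norm is bounded since $h(\sigma)=\sum_i \sigma_i/(\sigma_i+\gamma)$ has gradient $\gamma/(\sigma_i+\gamma)^2 \le 1/\gamma$, or a careful accounting that the successive differences of the augmented Lagrangian absorb the multiplier increments). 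So the key step to get right is to use the first-order optimality condition of the $\bm{Q_1}^{(v)}$-update in Eq.~\eqref{jq1}, namely $0 \in \partial H(\bm{Q_1}^{(v)^{t+1}}) - w_v^t\theta(\bm{T}^{(v)^t} - \bm{Q_1}^{(v)^{t+1}})$ combined with the Step~8 update $\bm{H_1}^{(v)^{t+1}} = \bm{H_1}^{(v)^t} + w_v^t\theta(\bm{A}^{(v)^{t+1}} - \bm{Q_1}^{(v)^{t+1}})$, to conclude $\bm{H_1}^{(v)^{t+1}} \in -\partial H(\bm{Q_1}^{(v)^{t+1}})$ and hence $\|\bm{H_1}^{(v)^{t+1}}\|_F \le \sqrt{k_v}/\gamma$; from there the chain above closes and the lemma follows.
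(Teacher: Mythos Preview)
Your opening premise—that $J^t := J(\bm{Q_1}^{(v)^t},\bm{A}^{(v)^t},\bm{H_1}^{(v)^t},w_v^t)$ is monotonically nonincreasing—is the gap. The quantity $J$ in Eq.~\eqref{gaimubiaohanshu} is (a piece of) an augmented Lagrangian, and Step~8 is a dual \emph{ascent} step: updating $\bm{H_1}^{(v)^{t}} = \bm{H_1}^{(v)^{t-1}} + w_v^{t-1}\theta(\bm{A}^{(v)^t}-\bm{Q_1}^{(v)^t})$ strictly increases $J$ whenever the primal residual is nonzero. The paper's prose in the convergence section is loose on this point, but its actual proof of the lemma does not assume monotone descent. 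Instead it computes exactly the increment produced by the multiplier/weight update (Eq.~\eqref{denghaobian}), sandwiches it between the decreases coming from the primal $\bm{Q_1}$- and $\bm{A}$-updates (Eq.~\eqref{budengshilian}), telescopes (Eq.~\eqref{bianhuanhuode}), and then uses that the coefficients $(w_v^i+w_v^{i-1})/\bigl(2\theta(w_v^{i-1})^2\bigr)$ are summable—thanks to the geometric schedule on $\theta$—to conclude that $J$ stays bounded despite the ascent steps. Only after that does the paper complete the square (Eq.~\eqref{genggaihuode}) as you do.

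Your optimality-condition argument is correct (up to a sign: one gets $\bm{H_1}^{(v)^{t+1}}\in\partial H(\bm{Q_1}^{(v)^{t+1}})$, not $-\partial H$) and is genuinely useful: it supplies exactly the justification the paper omits for its assertion that $\|\bm{H_1}^{(v)^i}-\bm{H_1}^{(v)^{i-1}}\|_F$ is bounded. But it cannot replace the telescoping. Bounding $\bm{H_1}$ gives you only $\|\bm{A}^{(v)^{t+1}}-\bm{Q_1}^{(v)^{t+1}}\|_F \le 2\sqrt{k_v}/(\gamma w_v^t\theta)$, a bound on the \emph{difference}; to separate the two you still need an a priori upper bound on $J^t$, and without tracking the Step~8 increments you do not have one (an inductive attempt via Step~2 does not close, since the linear map there is not contractive in $\bm{Q_1}$). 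The right repair is to keep your subgradient bound on $\bm{H_1}$ and insert the paper's summable-increment telescoping in place of your monotonicity claim; then your completion-of-squares and triangle-inequality steps go through.
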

\begin{proof}\label{bianhuazhengming}
With some algebra, we can obtain
\begin{align}\label{denghaobian}
  &J(\bm{Q_1}^{(v)^t},\bm{A}^{(v)^t},\bm{H_1}^{(v)^t},w_v^t)\nonumber\\
  =&J(\bm{Q_1}^{(v)^{t}},\bm{A}^{(v)^{t}},\bm{H_1}^{(v)^{t-1}},w_v^{t-1})\nonumber\\
  &+\frac{(w_v^t-w_v^{t-1})\theta}{2}||\bm{A}^{(v)}-\bm{Q_1}^{(v)}||_F^2\nonumber\\
  &+\text{Tr}[(\bm{H_1}^{(v)^t}-\bm{H_1}^{(v)^{t-1}})(\bm{A}^{(v)}-\bm{Q_1}^{(v)})]\nonumber\\
  =&J(\bm{Q_1}^{(v)^{t}},\bm{A}^{(v)^{t}},\bm{H_1}^{(v)^{t-1}},w_v^{t-1})\nonumber\\
  &+\frac{w_v^t+w_v^{t-1}}{2\theta(w_v^{t-1})^2}||\bm{H_1}^{(v)^t}-\bm{H_1}^{(v)^{t-1}}||_F^2.
\end{align}
Then,
\begin{align}\label{budengshilian}
&J(\bm{Q_1}^{(v)^{t+1}},\bm{A}^{(v)^{t+1}},\bm{H_1}^{(v)^t},w_v^t)\nonumber\\
  \leq&J(\bm{Q_1}^{(v)^{t+1}},\bm{A}^{(v)^t},\bm{H_1}^{(v)^t},w_v^t)\nonumber\\
   \leq&J(\bm{Q_1}^{(v)^t},\bm{A}^{(v)^t},\bm{H_1}^{(v)^t},w_v^t)\nonumber\\
  \leq&J(\bm{Q_1}^{(v)^t},\bm{A}^{(v)^t},\bm{H_1}^{(v)^{t-1}},w_v^{t-1})\nonumber\\
  &+\frac{w_v^t+w_v^{t-1}}{2\theta(w_v^{t-1})^2}||\bm{H_1}^{(v)^t}-\bm{H_1}^{(v)^{t-1}}||_F^2.
\end{align}
Iterating the inequality chain~\eqref{budengshilian} $t$ times, we obtain
\begin{align}\label{bianhuanhuode}
  &J(\bm{Q_1}^{(v)^{t+1}},\bm{A}^{(v)^{t+1}},\bm{H_1}^{(v)^t},w_v^t)\nonumber\\
  \leq&J(\bm{Q_1}^{(v)^1},\bm{A}^{(v)^1},\bm{H_1}^{(v)^0},w_v^0)\nonumber\\
  &+\sum_{i=1}^t\frac{w_v^i+w_v^{i-1}}{2\theta(w_v^{i-1})^2}||\bm{H_1}^{(v)^t}-\bm{H_1}^{(v)^{t-1}}||_F^2
\end{align}
Based on the definition of $w_v$ and $\theta$ in our submission, we have $\sum_{i=1}^{\infty}(w_v^i+w_v^{i-1})/(2\theta(w_v^{i-1})^2)<\infty$.
Since $||\bm{H_1}^{(v)^i}-\bm{H_1}^{(v)^{i-1}}||_F^2$ is bounded, all terms on the righthand side of the above inequality are bounded, thus $J(\bm{Q_1}^{(v)^{t+1}},\bm{A}^{(v)^{t+1}},\bm{H_1}^{(v)^t},w_v^t)$ is upper bounded.

Besides,
\begin{align}\label{genggaihuode}
  &J(\bm{Q_1}^{(v)^{t+1}},\bm{A}^{(v)^{t+1}},\bm{H_1}^{(v)^t},w_v^t)+\frac{1}{2\theta w_v^t}||\bm{H_1}^{(v)^{t}}||_F^2\nonumber\\
  =&H(\bm{Q_1}^{(v)^{t+1}})+\frac{w_v\theta}{2}||\bm{A}^{(v)^{t+1}}-\bm{Q_1}^{(v)^{t+1}}+\frac{\bm{H_1}^{(v)^t}}{w_v\theta}||_F^2.
  \end{align}
Since each term on the right-hand side is bounded, $\bm{A}^{(v)^{t+1}}$ is bounded. By the last term on the right-hand of Eq.~\eqref{genggaihuode}, $\{\bm{Q_1}^{(v)^t}\}$ is bounded. Therefore, both $\{\bm{Q_1}^{(v)^t}\}$ and $\{\bm{A}^{(v)^t}\}$ are bounded.
\end{proof}
\begin{lemma}\label{bianhua2}
Let $\{\bm{Q_1}^{(v)^t},\bm{A}^{(v)^t},\bm{H_1}^{(v)^t}\}$ be the sequence and $\{\bm{Q_1}^{(v)^{\ast}},\bm{A}^{(v)^{\ast}},\bm{H_1}^{(v)^{\ast}}\}$ be an accumulation point.
Then $\{\bm{Q_1}^{(v)^{\ast}},\bm{A}^{(v)^{\ast}}\}$ is a stationary point if $\lim\limits_{t\rightarrow\infty}w_v^t(\bm{A}^{(v)^{t+1}}-\bm{A}^{(v)^{t}})\rightarrow 0$.
\end{lemma}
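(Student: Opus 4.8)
\emph{Proof idea.} The plan is the standard accumulation-point argument for ADMM-type schemes, adapted to the nonconvex $\Gamma$-norm. By Lemma~\ref{bianhuajia} the sequences $\{\bm{Q_1}^{(v)^t}\}$ and $\{\bm{A}^{(v)^t}\}$ are bounded; the same estimates in the proof of Lemma~\ref{bianhuajia} (the $\theta$-weighted control of $\|\bm{H_1}^{(v)^t}-\bm{H_1}^{(v)^{t-1}}\|_F^2$ together with the boundedness of $J(\bm{Q_1}^{(v)},\bm{A}^{(v)},\bm{H_1}^{(v)},w_v)$ in Eq.~\eqref{gaimubiaohanshu}) show that $\{\bm{H_1}^{(v)^t}\}$ is bounded and that $\bm{H_1}^{(v)^{t+1}}-\bm{H_1}^{(v)^t}\to\bm{0}$. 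First I would fix a subsequence $\{t_j\}$ along which $\bm{Q_1}^{(v)^{t_j}}\to\bm{Q_1}^{(v)^{\ast}}$, $\bm{A}^{(v)^{t_j}}\to\bm{A}^{(v)^{\ast}}$, $\bm{H_1}^{(v)^{t_j}}\to\bm{H_1}^{(v)^{\ast}}$ and $w_v^{t_j}\to w_v^{\ast}$; this is possible since all four quantities are bounded and, for a non-dying view, $w_v^t$ stays away from $0$ by the clipped multiplicative update \eqref{gengw} and the normalization. From the dual update in Eq.~\eqref{gengh}, $w_v^t\theta(\bm{A}^{(v)^{t+1}}-\bm{Q_1}^{(v)^{t+1}})=\bm{H_1}^{(v)^{t+1}}-\bm{H_1}^{(v)^t}\to\bm{0}$, and since $\theta$ eventually equals $\theta_{\max}$ and $w_v^t$ is bounded below, the primal residual $\bm{A}^{(v)^{t+1}}-\bm{Q_1}^{(v)^{t+1}}\to\bm{0}$; hence $\bm{Q_1}^{(v)^{\ast}}=\bm{A}^{(v)^{\ast}}$, i.e.\ the limit point is feasible for the constraint $\bm{A}^{(v)}=\bm{Q_1}^{(v)}$ in Eq.~\eqref{j1}.

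Next I would pass to the limit in the first-order optimality condition of each subproblem. For Step~3 the closed form \eqref{bishijie}, obtained by DC linearization of the Moreau--Yosida surrogate, is precisely a stationary point of Eq.~\eqref{jq1}, so in matrix form $\bm{H_1}^{(v)^{t_j}}+w_v^{t_j}\theta(\bm{A}^{(v)^{t_j}}-\bm{Q_1}^{(v)^{t_j+1}})\in\partial H(\bm{Q_1}^{(v)^{t_j+1}})$, where $\partial H$ is the limiting (generalized) subdifferential of the lower-semicontinuous $\Gamma$-norm. Here is where the hypothesis enters: $w_v^t(\bm{A}^{(v)^{t+1}}-\bm{A}^{(v)^t})\to\bm{0}$ together with the bounds on $\theta$ and on $1/w_v^t$ forces $\bm{A}^{(v)^{t_j+1}}\to\bm{A}^{(v)^{\ast}}$, hence (by the primal residual) $\bm{Q_1}^{(v)^{t_j+1}}\to\bm{A}^{(v)^{\ast}}=\bm{Q_1}^{(v)^{\ast}}$ and $\bm{A}^{(v)^{t_j}}-\bm{Q_1}^{(v)^{t_j+1}}\to\bm{0}$, so the left-hand side of the inclusion converges to $\bm{H_1}^{(v)^{\ast}}$. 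Graph-closedness (outer semicontinuity) of the limiting subdifferential then gives $\bm{H_1}^{(v)^{\ast}}\in\partial H(\bm{Q_1}^{(v)^{\ast}})$. An entirely analogous limit in the smooth least-squares optimality condition \eqref{jzpian} of Step~2 — using again $w_v^t(\bm{A}^{(v)^{t+1}}-\bm{A}^{(v)^t})\to\bm{0}$ so that $\bm{A}^{(v)^{t+1}}$ and $\bm{A}^{(v)^{t_j}}$ share the limit, and the boundedness of $\bm{E}^{(v)^t},\bm{Q_1}^{(v)^t},\bm{Q_2}^{(v)^t},\bm{H_0}^{(v)^t},\bm{H_2}^{(v)^t}$ from Lemma~\ref{bianhuajia} — yields the remaining stationarity equation in $\bm{A}^{(v)}$. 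Combining this with the feasibility $\bm{Q_1}^{(v)^{\ast}}=\bm{A}^{(v)^{\ast}}$ shows that $(\bm{Q_1}^{(v)^{\ast}},\bm{A}^{(v)^{\ast}})$ with multiplier $\bm{H_1}^{(v)^{\ast}}$ satisfies the KKT system of Eq.~\eqref{j1}, i.e.\ it is a stationary point, which is the claim.

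The main obstacle is the nonconvexity and nonsmoothness of $H=\|\cdot\|_{\Gamma}$: classical convex subdifferential calculus does not apply, so the argument must (i) work with a limiting/Clarke generalized subdifferential of $H$, (ii) verify that the DC-based closed form \eqref{bishijie} actually returns a point satisfying the generalized inclusion above (this is essentially the content of Theorem~\ref{mfanshudingli} plus the $\text{prox}_{h,\omega}$ characterization), and (iii) invoke graph-closedness of that subdifferential to pass to the limit. A secondary technical point, needed to divide through by $w_v^t\theta$ in both limiting steps, is that $w_v^t$ is bounded away from $0$ for non-dying views; this follows from the update \eqref{gengw} with $\mu\in\{0.9,1.1\}$, the clipping to $[w_{\min},w_{\max}]$, and the per-iteration normalization $\sum_v w_v=1$. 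Once (i)--(iii) and this lower bound are in place, the rest is continuity together with the assumption $w_v^t(\bm{A}^{(v)^{t+1}}-\bm{A}^{(v)^t})\to\bm{0}$, which is used exactly to identify $\lim_j\bm{A}^{(v)^{t_j+1}}$ with $\bm{A}^{(v)^{\ast}}$ so that the two limiting optimality conditions hold at the \emph{same} point.
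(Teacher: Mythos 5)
Your proposal is correct and follows essentially the same route as the paper's proof: primal feasibility is obtained from the dual update $\bm{H_1}^{(v)^{t+1}}-\bm{H_1}^{(v)^t}=w_v^t\theta(\bm{A}^{(v)^{t+1}}-\bm{Q_1}^{(v)^{t+1}})$, the first-order condition of the $\bm{Q_1}$-subproblem is rewritten so that the hypothesis $w_v^t(\bm{A}^{(v)^{t+1}}-\bm{A}^{(v)^t})\rightarrow 0$ eliminates the drift term, and the limit is taken using the SVD characterization of $\partial H$ from Theorem~\ref{mfanshudingli} to conclude the KKT conditions hold at the accumulation point. Your version is in fact slightly more careful than the paper's (explicit graph-closedness of the limiting subdifferential, the lower bound on $w_v^t$, and the additional check of the $\bm{A}^{(v)}$-stationarity), but these are refinements of the same argument rather than a different approach.
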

\begin{proof}
The sequence $\{\bm{Q_1}^{(v)^t},\bm{Z}^{(v)^t},\bm{H_1}^{(v)^t}\}$ is bounded as shown in Lemma~\ref{bianhua2}. By Bolzano-Weierstrass theorem, the sequence must have at
least one accumulation point, e.g., $\{\bm{Q_1}^{(v)^{\ast}},\bm{A}^{(v)^{\ast}},\bm{H_1}^{(v)^{\ast}}\}$. Without loss of generality, we assume that $\{\bm{Q_1}^{(v)^t},\bm{A}^{(v)^t},\bm{H_1}^{(v)^t}\}$ itself converges to $\{\bm{Q_1}^{(v)^{\ast}},\bm{A}^{(v)^{\ast}},\bm{H_1}^{(v)^{\ast}}\}$.

Since $\bm{A}^{(v)^t}-\bm{Q_1}^{(v)^t}=(\bm{H_1}^{(v)^t}-\bm{H_1}^{(v)^{t-1}})/(w_v^t\theta)$, we have $\lim\limits_{t\rightarrow\infty}\bm{A}^{(v)^t}-\bm{Q_1}^{(v)^t}=0$. Thus the primal feasibility condition is satisfied.

For $\bm{Q_1}^{(v)^{t+1}}$, it is true that
\begin{align}\label{lapulasitu}
 &\partial_{\bm{Q_1}}(\bm{Q_1}^{(v)^{t+1}},\bm{A}^{(v)^t},\bm{H_1}^{(v)^t},w_v^t)|_{\bm{Q_1}^{(v)^{t+1}}}\nonumber\\
 =&\partial_{\bm{Q_1}}H(\bm{Q_1}^{(v)^{t+1}})+\bm{H_1}^{(v)^t}+w_v^t\theta(\bm{A}^{(v)^t}-\bm{Q_1}^{(v)^t})\\
  =&\partial_{\bm{Q_1}}H(\bm{Q_1}^{(v)^{t+1}})+\bm{H_1}^{(v)^{t+1}}+w_v^t\theta(\bm{A}^{(v)^{t+1}}-\bm{A}^{(v)^t})=0.\nonumber
 \end{align}
 If the singular value decomposition of $\bm{Q_1}^{(v)}$ is $\bm{U}^{(v)}diag(\sigma_i^{(v)})\bm{V}^{(v)^T}$ , according to Theorem~\ref{mfanshudingli},
 \begin{align}\label{lapulasitu}
\partial_{\bm{Q_1}}H(\bm{Q_1}^{(v)^{t+1}})|_{\bm{Q_1}^{(v)^{t+1}}}=\bm{U}diag(\tau^{(v)})\bm{V}^{(v)^T},
 \end{align}
 where $\tau_i=\gamma/(\gamma+\sigma_i)^2$ if $\sigma_i\neq 0$; otherwise, it is $1/\gamma$.
Since $\sigma_i\in(0,1/\gamma]$ is finite, $H(\bm{Q_1}^{(v)^{t+1}})|_{\bm{Q_1}^{(v)^{t+1}}}$ is bounded.
Since $\bm{A}^{(v)^t}$ is bounded, $w_v^t\theta(\bm{A}^{(v)^{t+1}}-\bm{A}^{(v)^t})$ is bounded.
Under the assumption that $w_v^t\theta(\bm{A}^{(v)^{t+1}}-\bm{A}^{(v)^t})\rightarrow 0$,
\begin{align}\label{lapulasitu}
\partial_{\bm{Q_1}}H(\bm{Q_1}^{(v)^{t+1}})|_{\bm{Q_1}^{(v)^{t+1}}}+\bm{H_1}^{(v)^{\ast}}=0.
 \end{align}
 Hence, $\{\bm{Q_1}^{(v)^{\ast}},\bm{A}^{(v)^{\ast}},\bm{H_1}^{(v)^{\ast}}\}$ satisfies the KKT conditions of $J(\bm{Q_1}^{(v)^{t+1}},\bm{Z}^{(v)^{t+1}},\bm{H_1}^{(v)^t})$. Thus $\{\bm{Q_1}^{(v)^{\ast}},\bm{A}^{(v)^{\ast}}\}$ is a stationary point of the objective function.
\end{proof}

Thus, the objective function can reduce monotonically to a stationary value and UIMC can at least find a locally optimal solution. When we initialize each variable to obtain this solution, different initialization methods may affect the convergence speed, but UIMC will eventually reach convergence due to the monotonic decline of our objective function.
\begin{table}[t]
\centering
\caption{Statistics of datasets.}
\begin{tabular}{c|cccc}
\hline
Dataset  & BUAA & 3-Sources  & BBC & Digit \\\hline
\# views   & 2& 3  & 4& 5 \\\hline
\# instances   & 180& 169  & 685& 2000 \\\hline
\# clusters   & 20 & 6 & 5& 10  \\\hline
\end{tabular}
\label{dataset}
\end{table}
\begin{table}[t]
\centering
\caption{The missing rate vector of each view.}
 {\scalebox{1.0}{\begin{tabular}{cccc}
\hline
BUAA & 3-Sources  & BBC & Digit \\\hline
VIS (1$\times\bm{r}$) & BBC (0.2$\times\bm{r}$) & Ba (0.25$\times\bm{r}$)& FAC (0.25$\times\bm{r}$) \\
NIR (1$\times\bm{r}$)  & REU (1.0$\times\bm{r}$)& Bb (0.75$\times\bm{r}$)&FOU (0.75$\times\bm{r}$)     \\
- & GUA (1.8$\times\bm{r}$)&Bc (1.25$\times\bm{r}$) & KAR (1.00$\times\bm{r}$)   \\
-  & -  & Bd (1.75$\times\bm{r}$)& PIX (1.25$\times\bm{r}$)  \\
 -  & - &-& ZER (1.75$\times\bm{r}$)    \\\hline
\end{tabular}}}
\label{datashezhi}
\end{table}
\begin{figure}[t]
\centering
\includegraphics[width=0.48\textwidth]{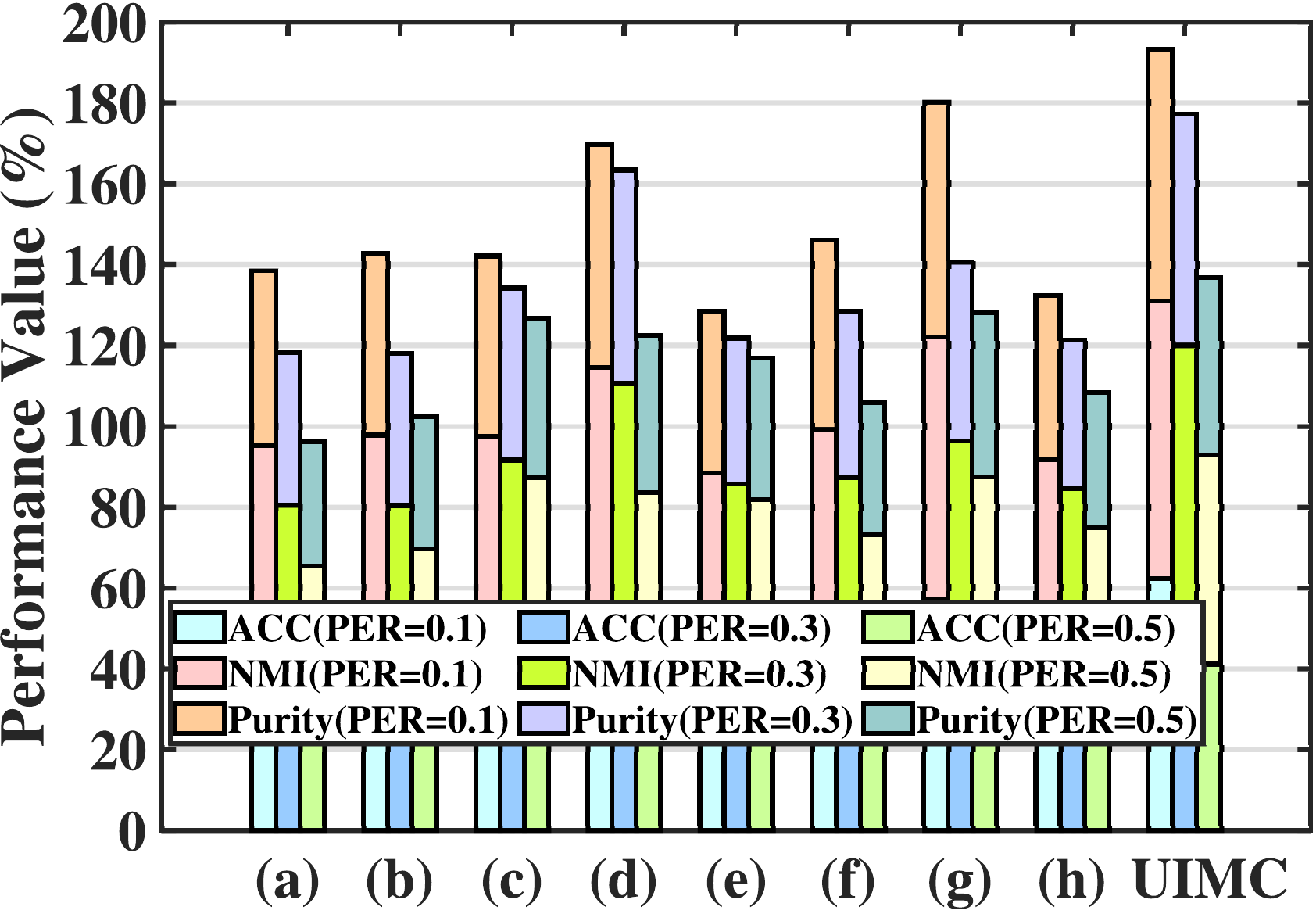}
\caption{Balanced incomplete multi-view clustering results, where (a) is BSV, (b) is Concat, (c) is  PVC, (d) is IMG, (e) is MIC, (f) is DAIMC, (g) is IMSC$\_$AGL, (h) is UEAF.}
\label{fig:UIMC_b}
\end{figure}
\subsection{Complexity Analysis}
From Section \ref{subsection:opt}, the major computational costs of our proposed UIMC mainly come from the operations like matrix inverse, SVD, and eigenvalue decomposition. Therefore,
Steps 3, 7, and 8 are the main computational costs.
For Steps 3, 7, and 8, they have the same maximum computational complexity $O(k_v^3)$.
Therefore, the whole computational complexity of UIMC is about $O(i\sum_{v=1}^{n_v}k_v^3)$, where $i$ is the iteration number, $n_v$ is the view number, and $k_v$ is the number of presented instances in view $v$. To quantify the computational complexity, we report the time cost under different PERs in Table~\ref{runtime}.
\section{Performance Evaluation}
\label{section:exp}
We first illustrate the clustering performance of the proposed UIMC, then verify UIMC's convergence, and finally analyze UIMC's parameter sensitivity.
\begin{figure*}[t]
\centering
\includegraphics[width=\textwidth]{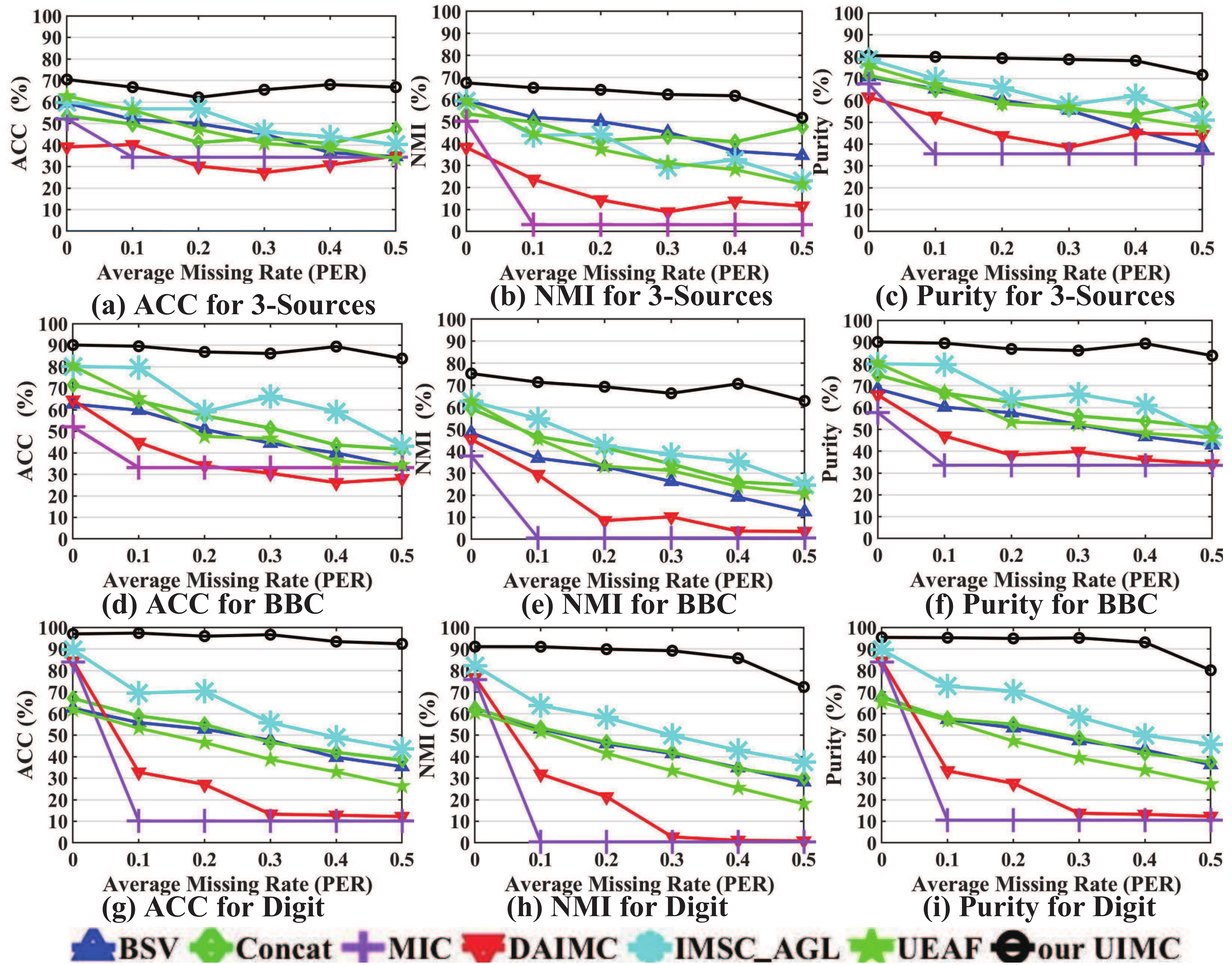}
\caption{Unbalanced incomplete multi-view clustering results.}
\label{fig:UIMC_unb}
\end{figure*}
\begin{figure*}[t]
\centering
\subfigure[ACC (BI) for 3-Sources]{\label{fig:zhuzhuangtu_acc} \includegraphics[width=0.3\textwidth]{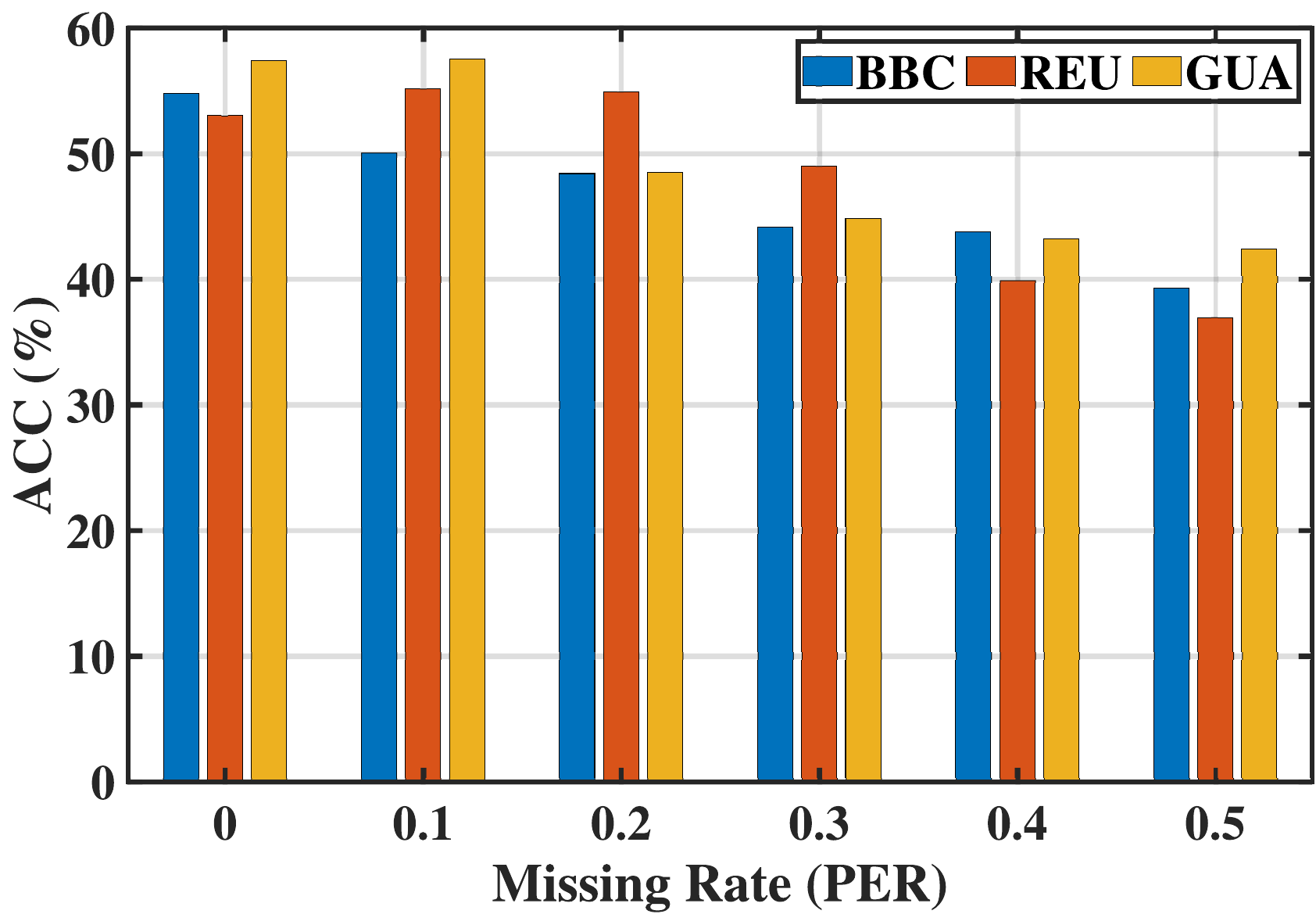} }
\subfigure[NMI (BI) for 3-Sources]{\label{fig:zhuzhuangtu_nmi} \includegraphics[width=0.3\textwidth]{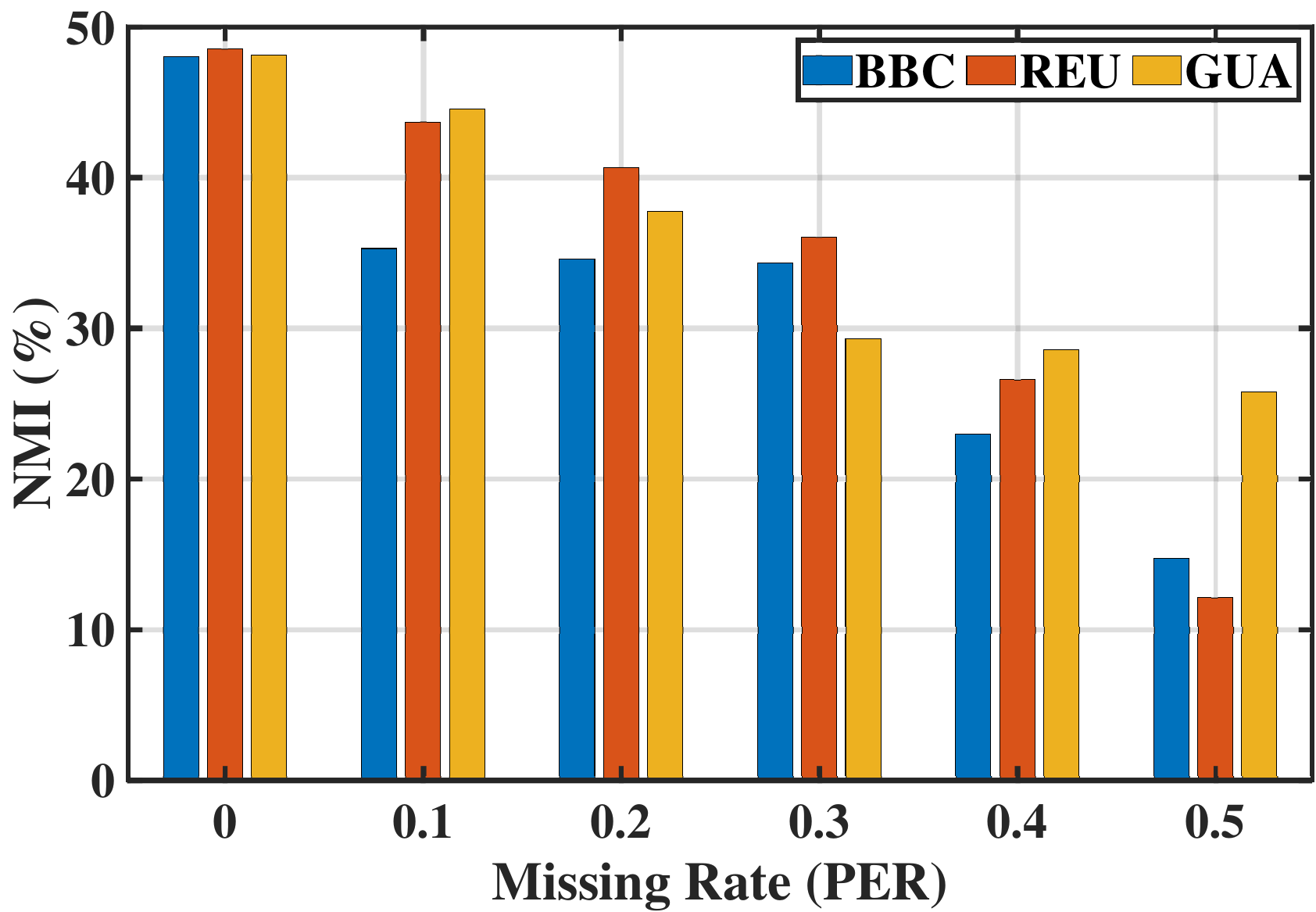} }
\subfigure[Purity (BI)  for 3-Sources]{\label{fig:zhuzhuangtu_pur} \includegraphics[width=0.3\textwidth]{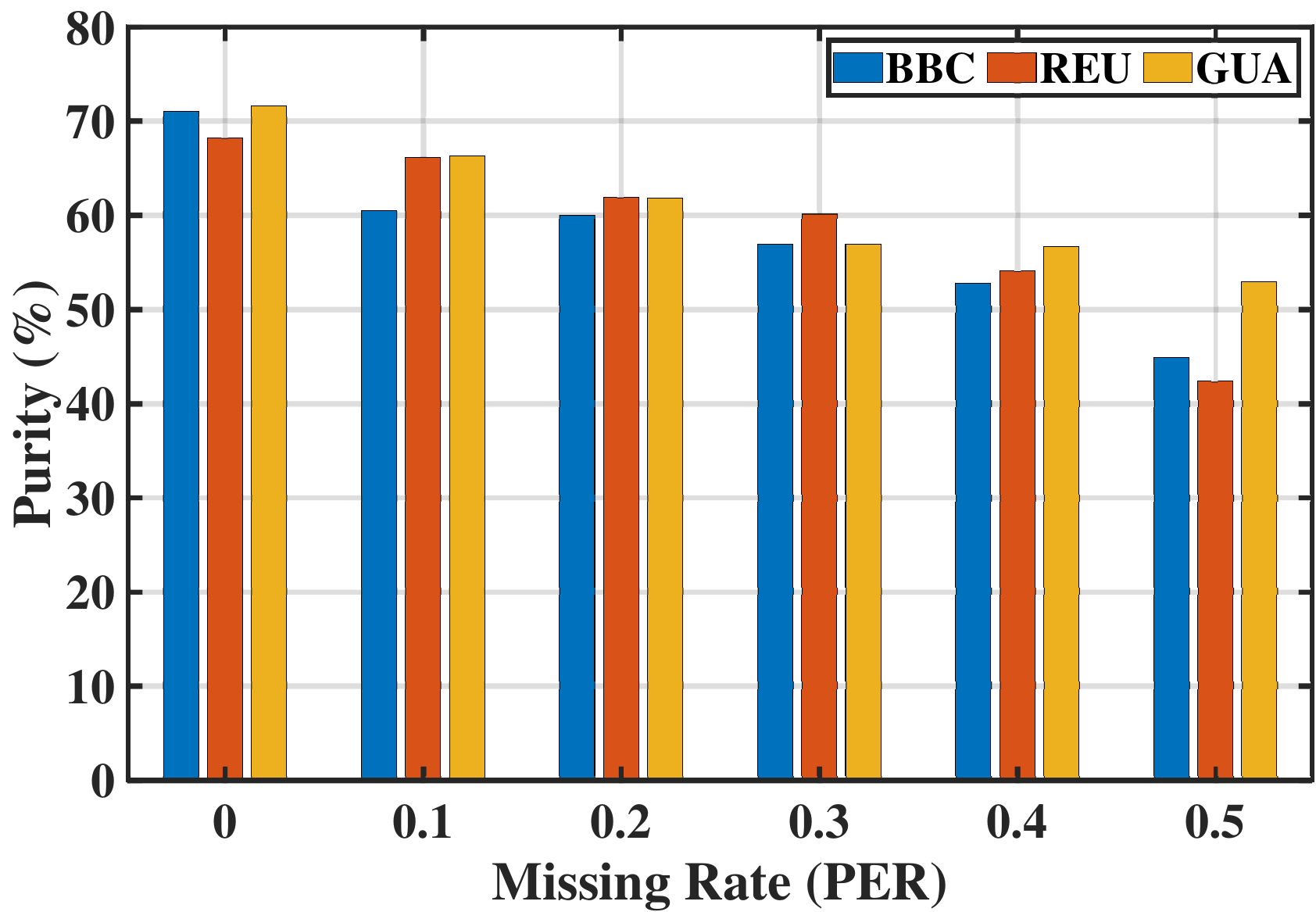}}
\subfigure[ACC (BI) for BBC]{\label{fig:zhuzhuangtu_bbc_acc} \includegraphics[width=0.3\textwidth]{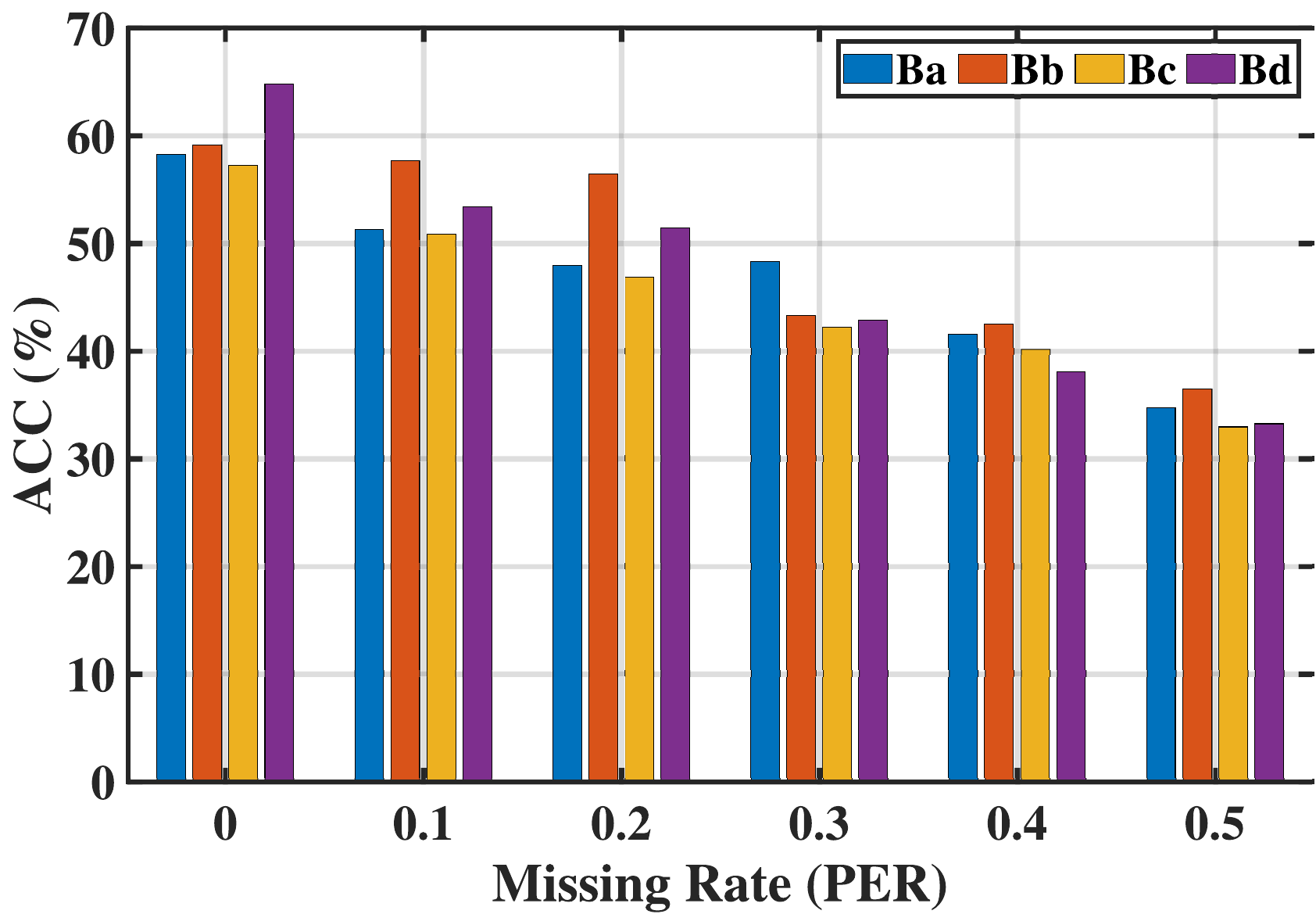} }
\subfigure[NMI (BI) for BBC]{\label{fig:zhuzhuangtu_bbc_nmi} \includegraphics[width=0.3\textwidth]{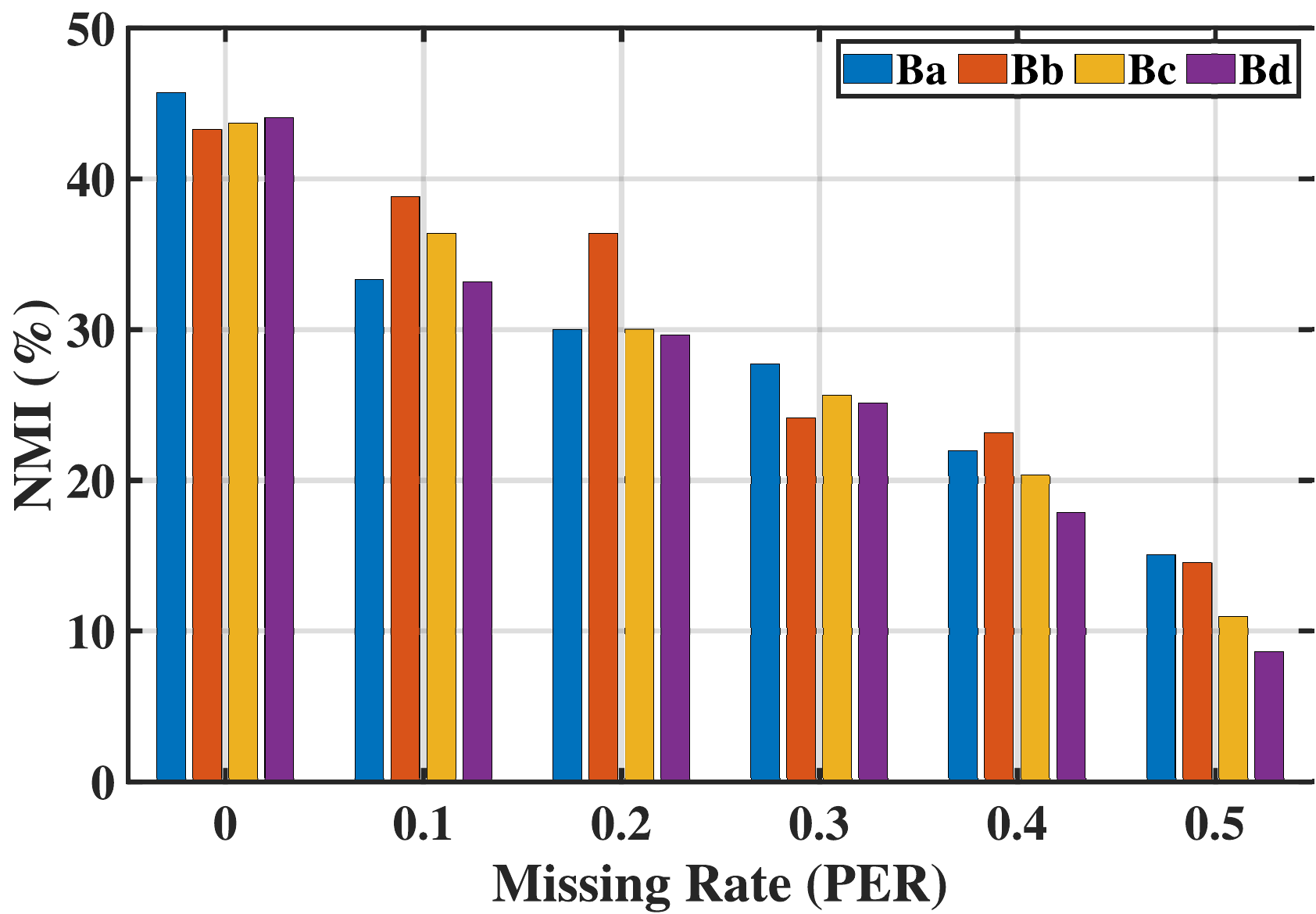} }
\subfigure[Purity (BI) for BBC]{\label{fig:zhuzhuangtu_bbc_pur} \includegraphics[width=0.3\textwidth]{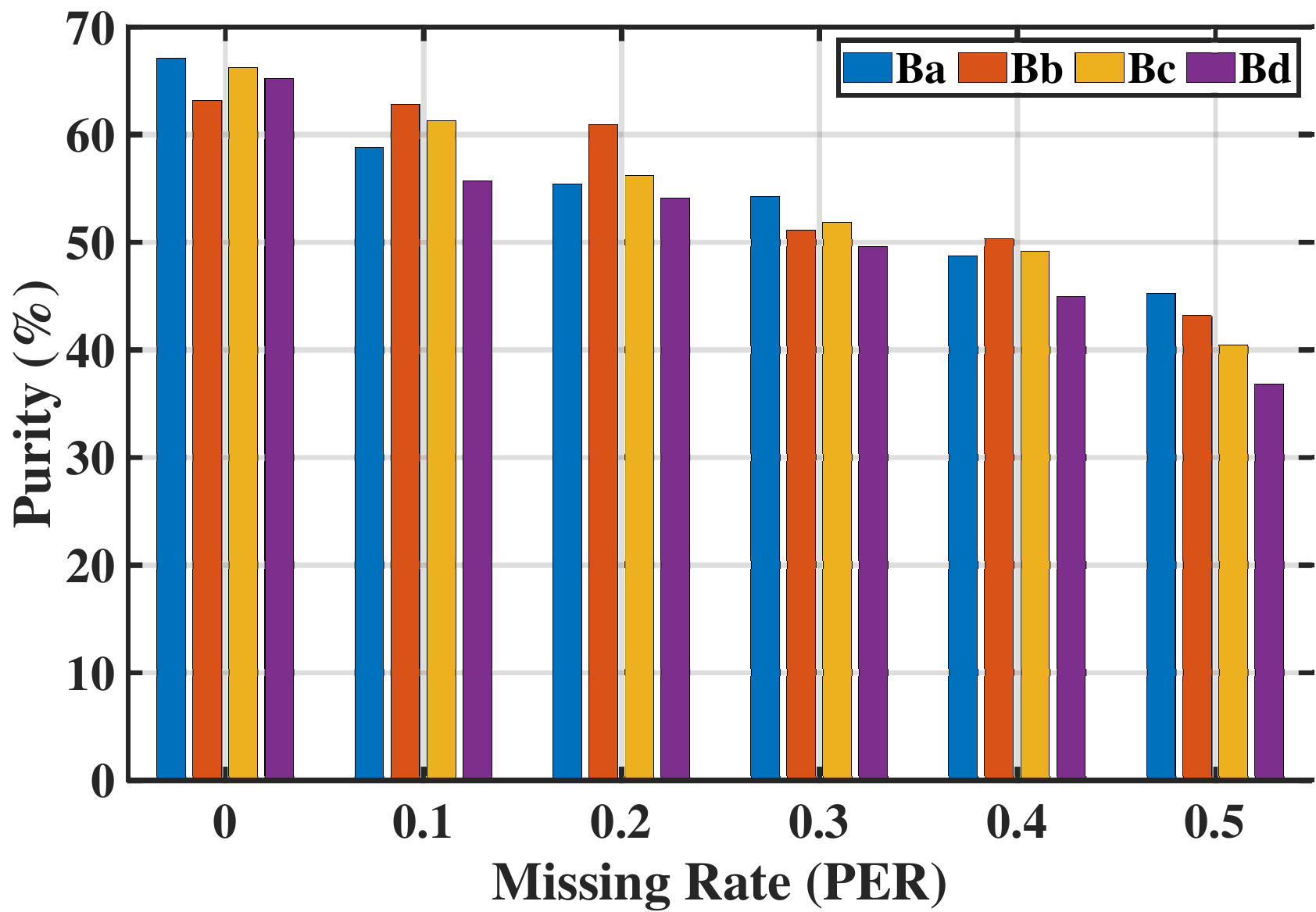}}
\subfigure[ACC (BI) for Digit]{\label{fig:zhuzhuangtu_digit_acc} \includegraphics[width=0.3\textwidth]{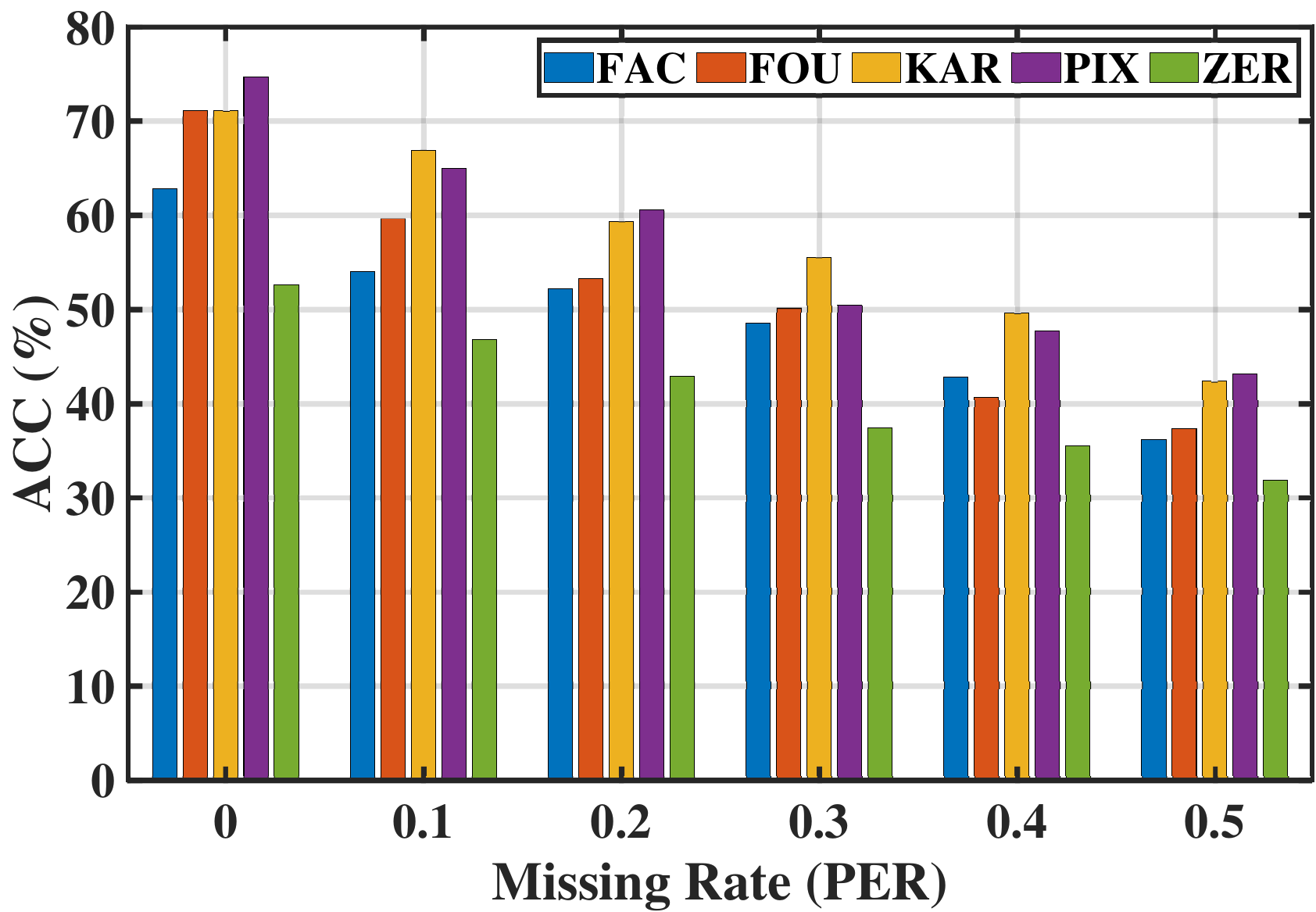} }
\subfigure[NMI (BI) for Digit]{\label{fig:zhuzhuangtu_digit_nmi} \includegraphics[width=0.3\textwidth]{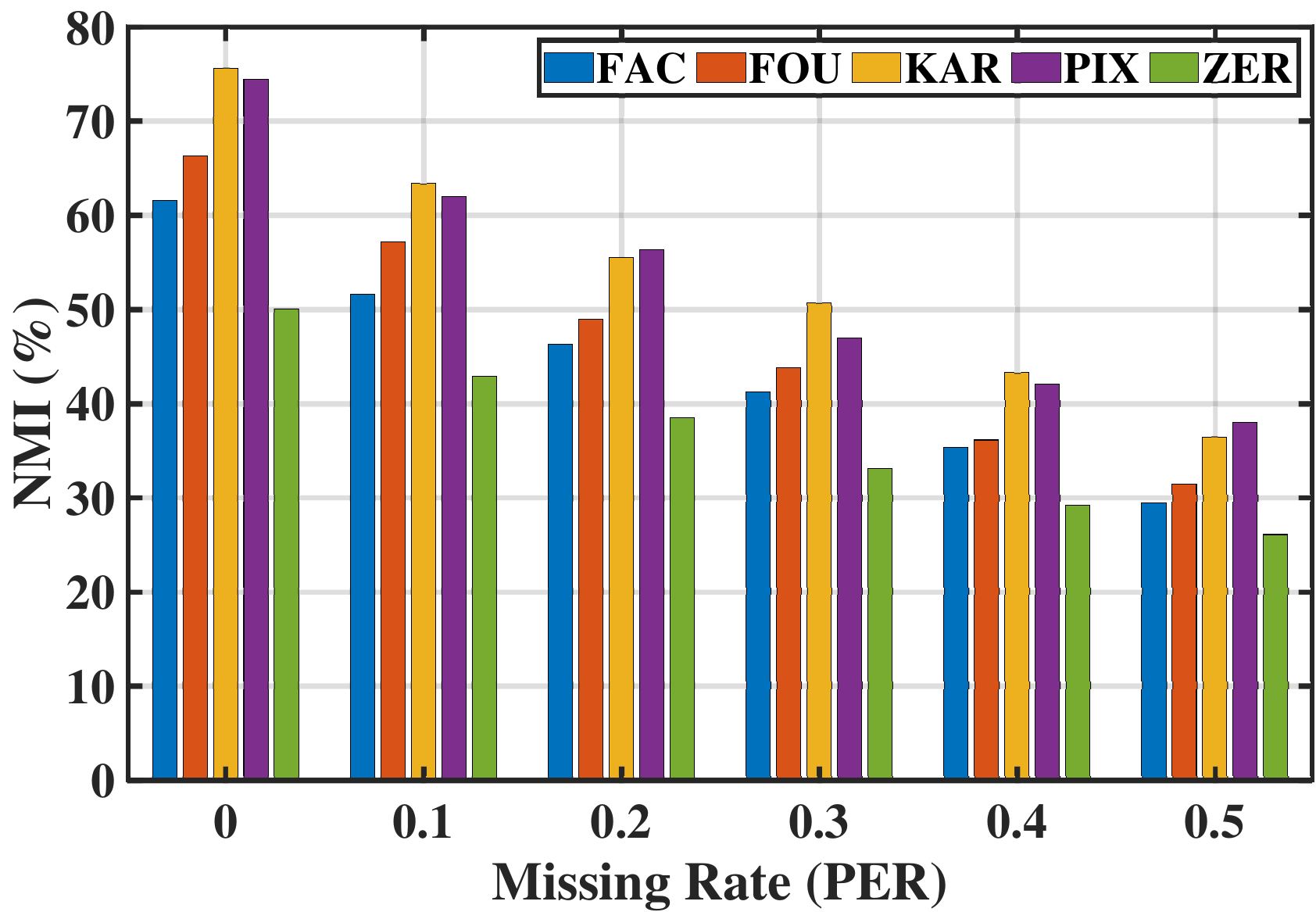} }
\subfigure[Purity (BI) for Digit]{\label{fig:zhuzhuangtu_digit_pur} \includegraphics[width=0.3\textwidth]{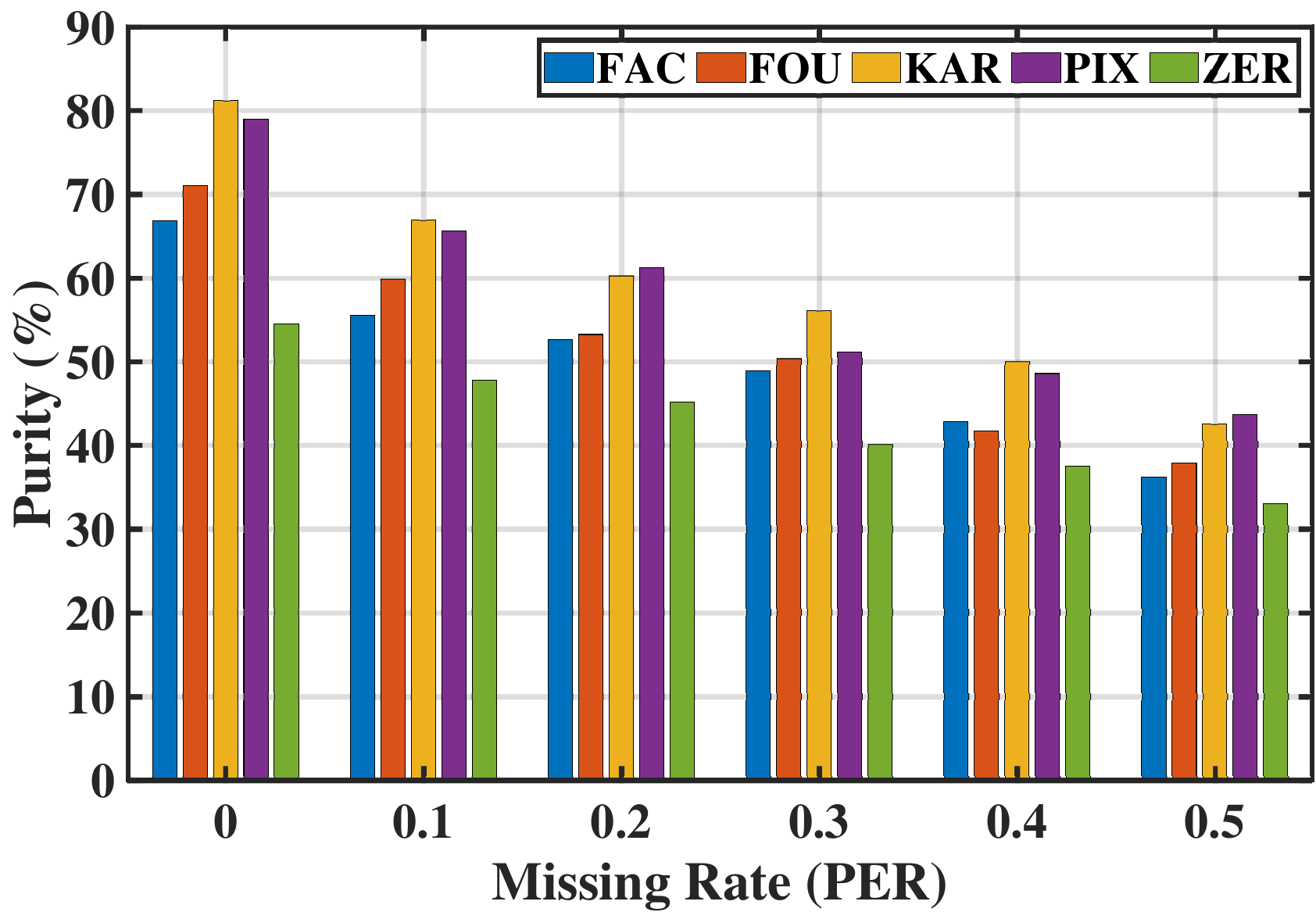}}
\caption{View availability comparison on balanced incomplete multi-view clustering, where ``BI'' means ``balanced incomplete multi-view clustering''.}
\label{fig:dangeshitu_ba}
\end{figure*}
\begin{figure*}[t]
\centering
\subfigure[ACC (UI) for 3-Sources]{\label{fig:unbanlanced_zhu_acc} \includegraphics[width=0.235\textwidth]{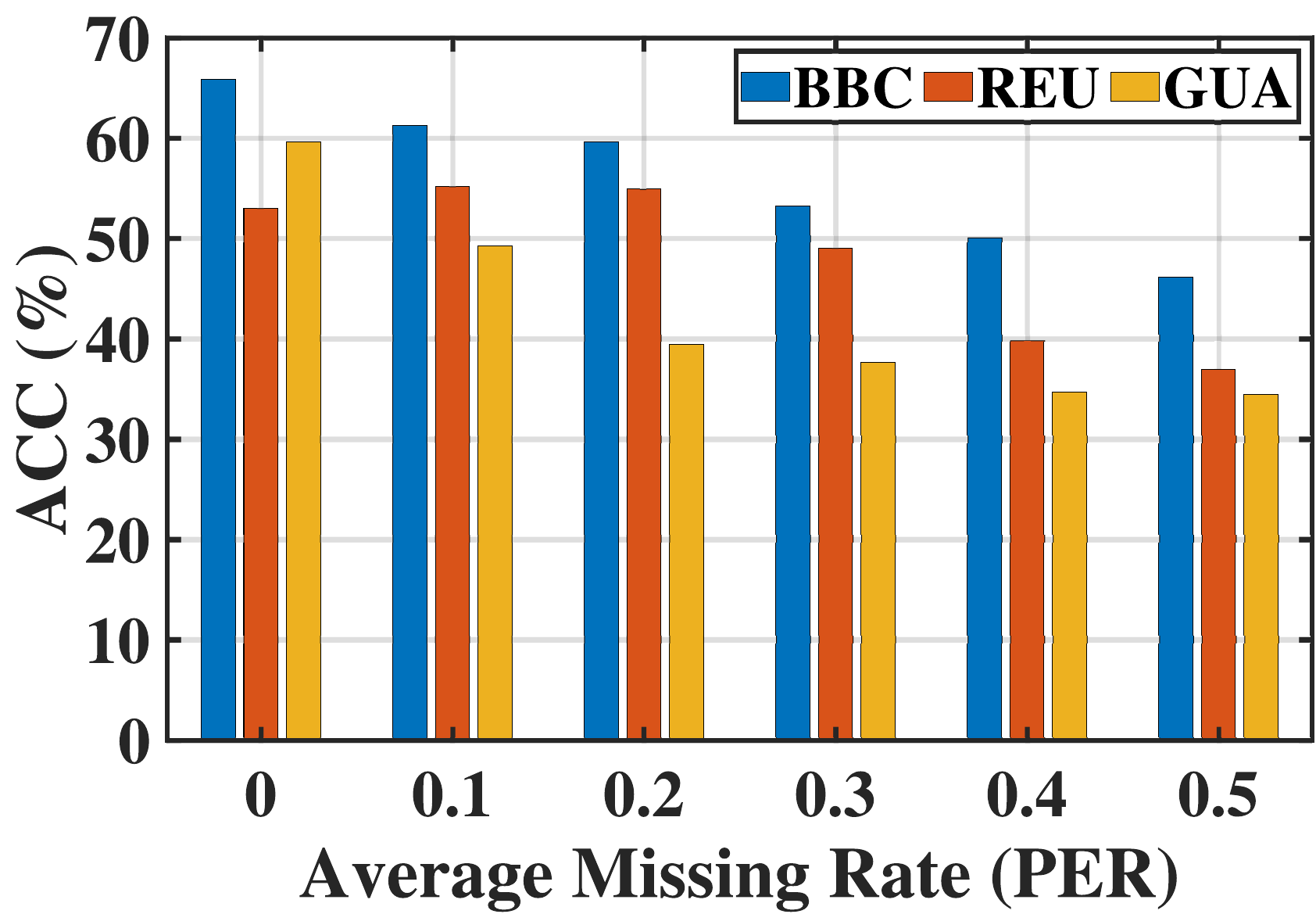} }
\subfigure[NMI (UI) for 3-Sources]{\label{fig:unbanlanced_zhu_nmi} \includegraphics[width=0.235\textwidth]{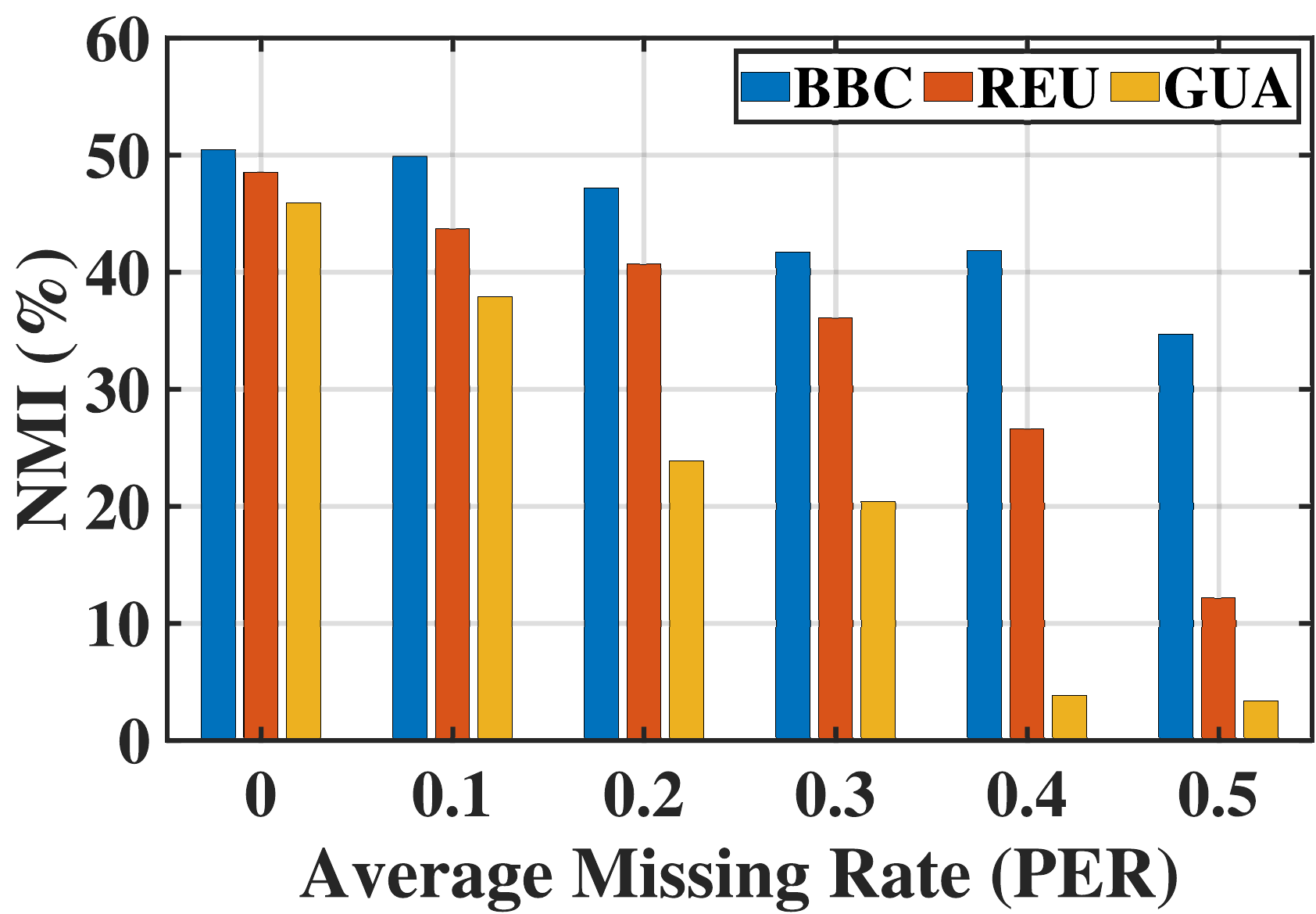} }
\subfigure[Purity (UI) for 3-Sources]{\label{fig:unbanlanced_zhu_pur} \includegraphics[width=0.235\textwidth]{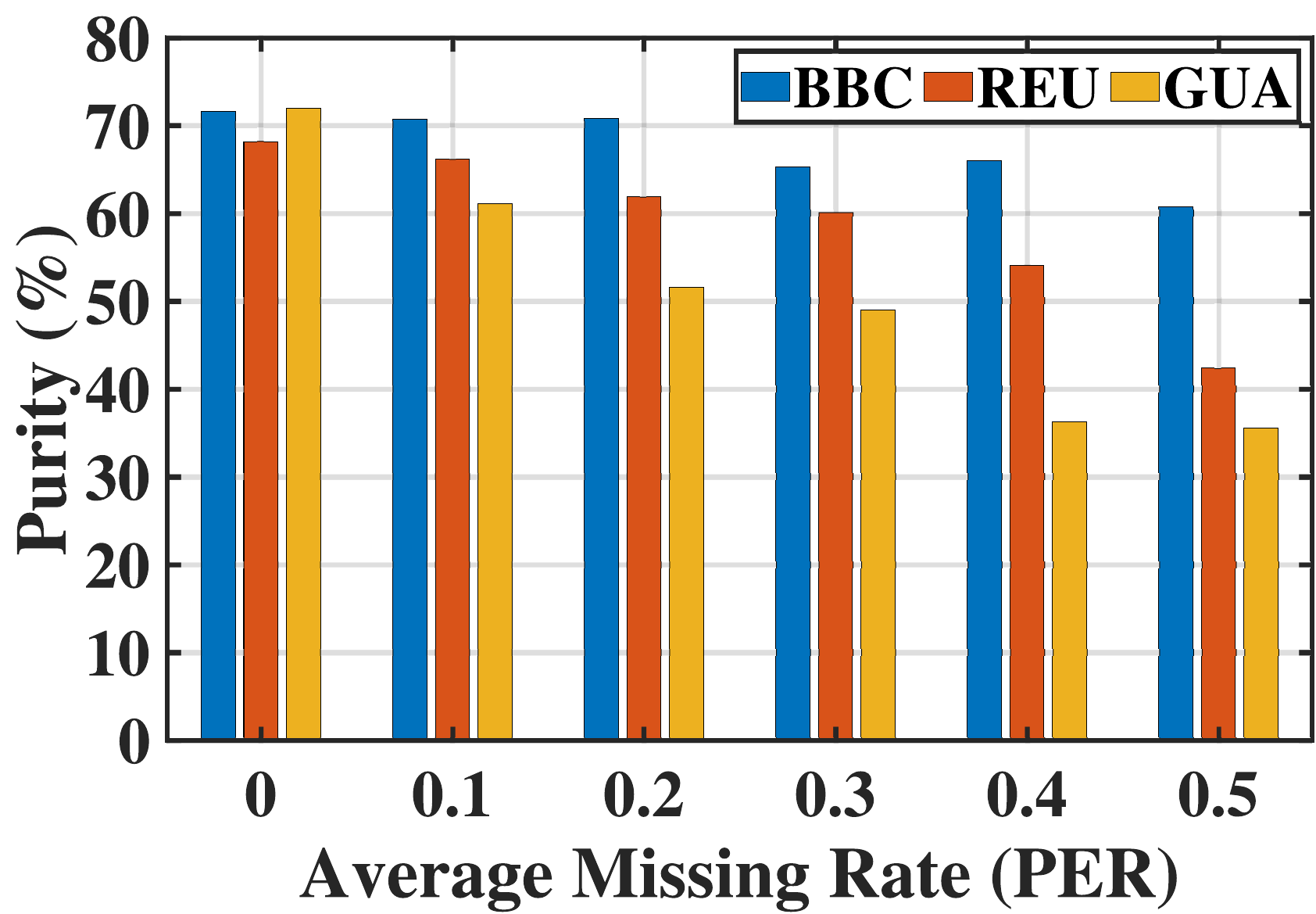}}
\subfigure[View completeness for 3-Sources]{\label{fig:unbanlanced_zhu_wanzhengxing} \includegraphics[width=0.235\textwidth]{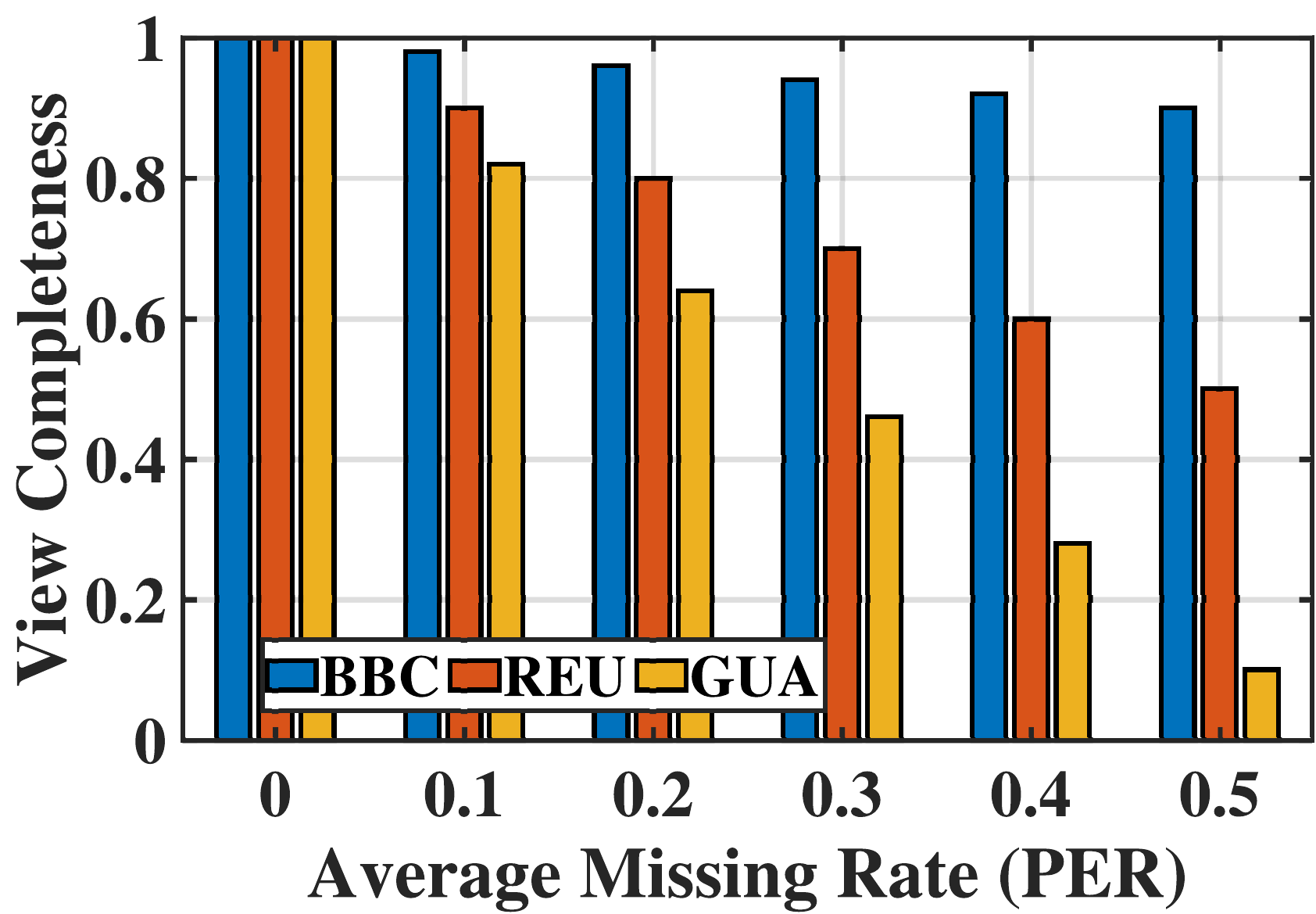}}
\subfigure[ACC (UI) for BBC]{\label{fig:unbanlanced_zhu_bbc_acc} \includegraphics[width=0.235\textwidth]{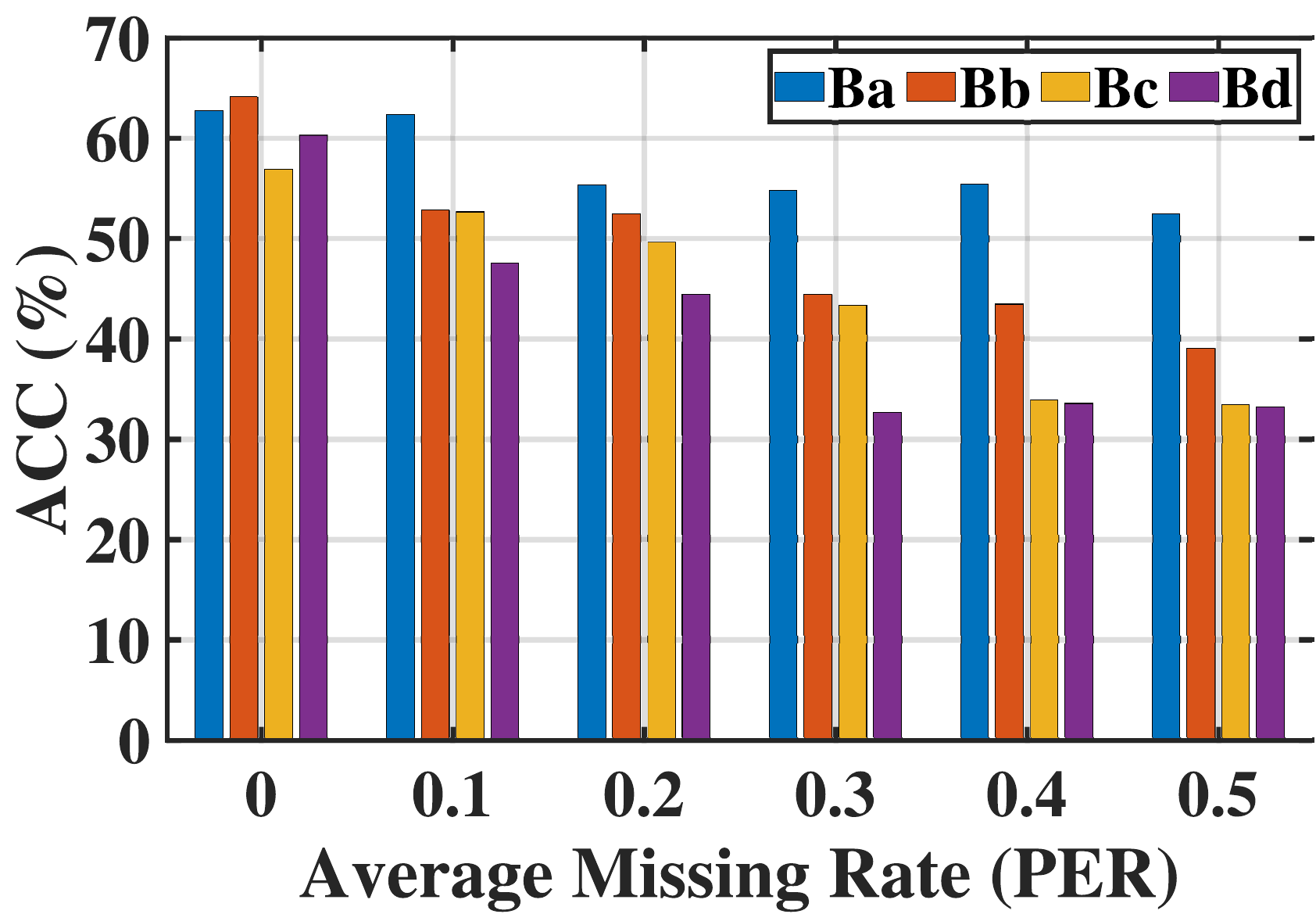} }
\subfigure[NMI (UI) for BBC]{\label{fig:unbanlanced_zhu_bbc_nmi} \includegraphics[width=0.235\textwidth]{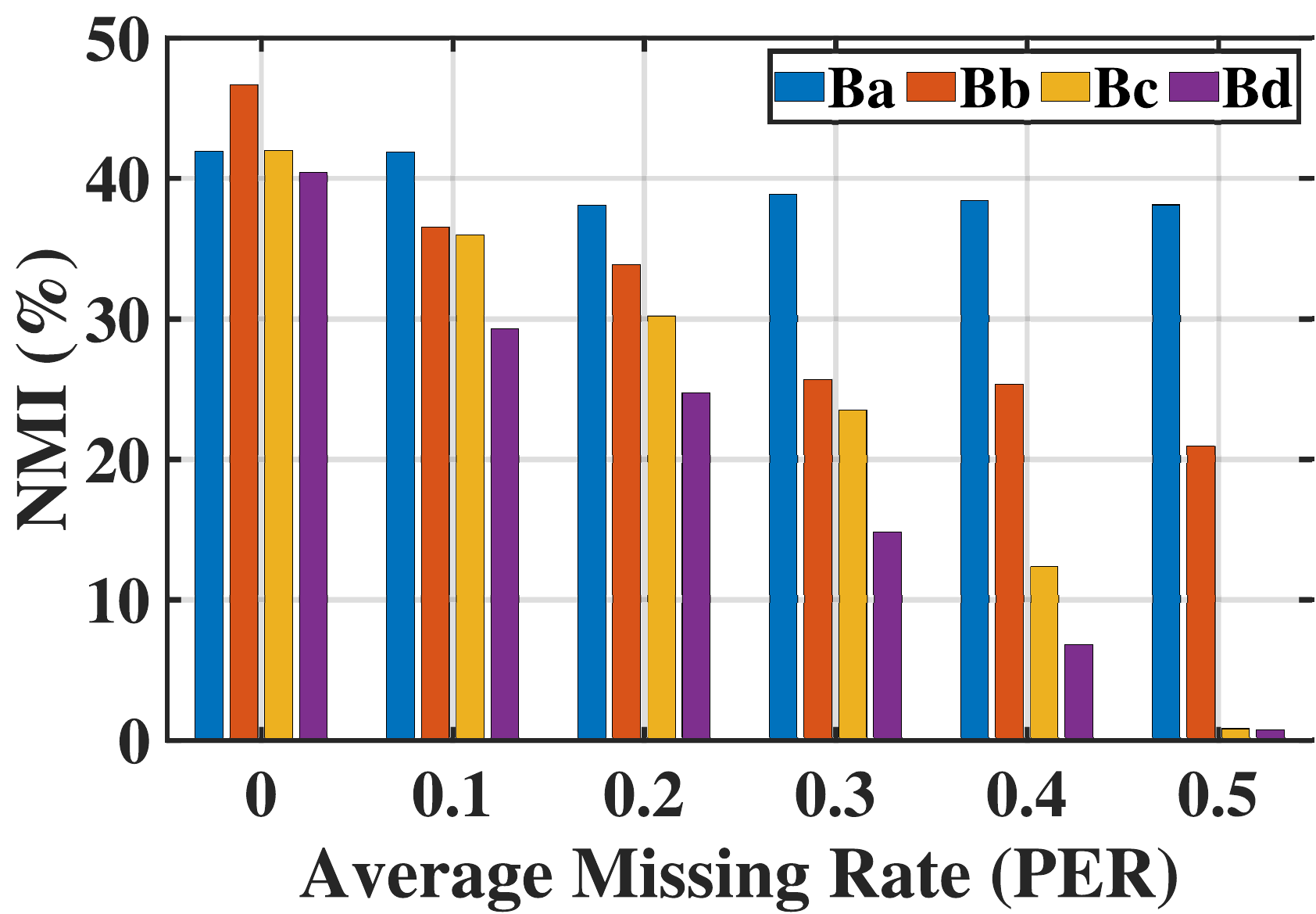} }
\subfigure[Purity (UI) for BBC]{\label{fig:unbanlanced_zhu_bbc_pur} \includegraphics[width=0.235\textwidth]{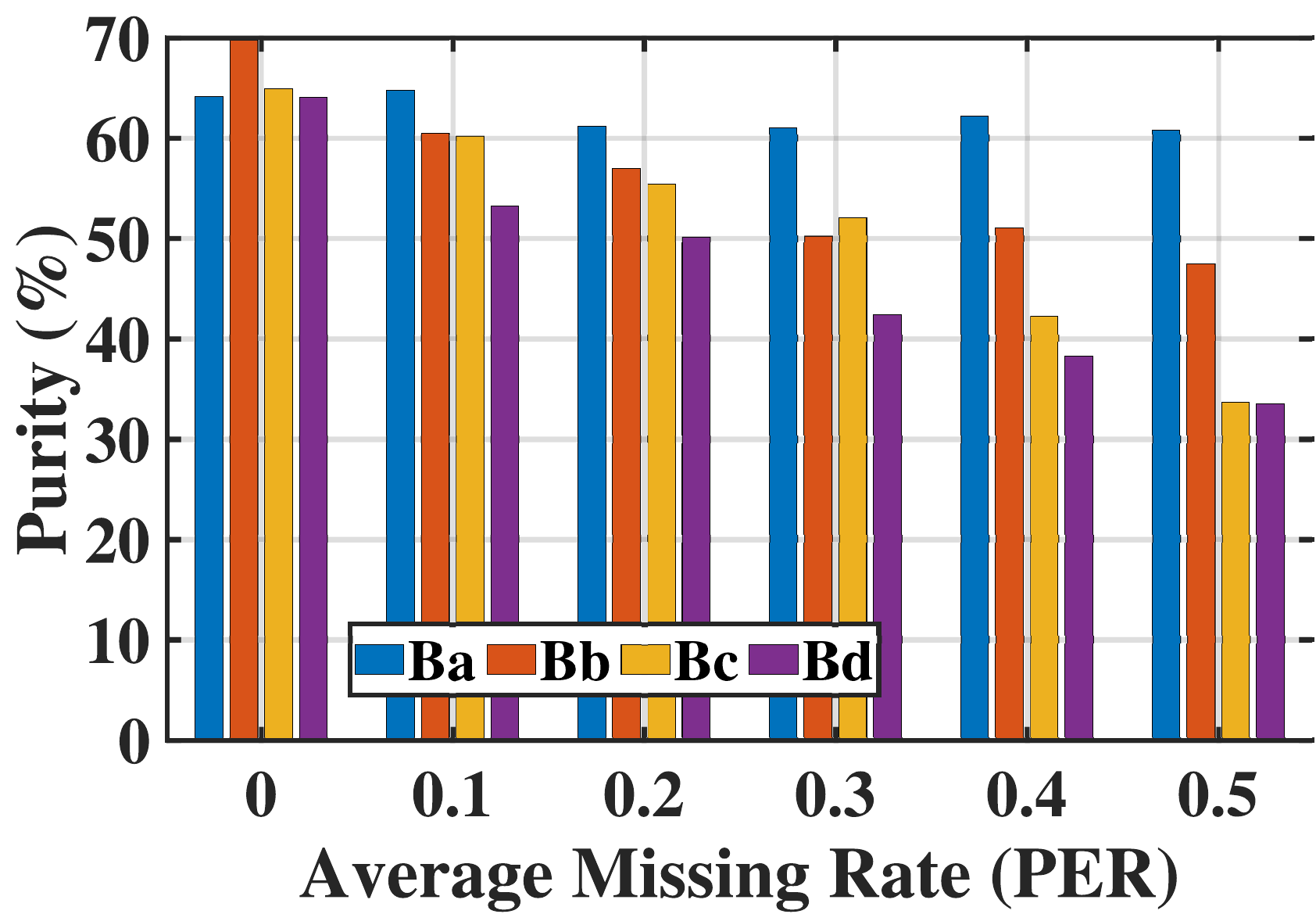}}
\subfigure[View completeness for BBC]{\label{fig:unbanlanced_zhu_bbc_wanzhengxing} \includegraphics[width=0.235\textwidth]{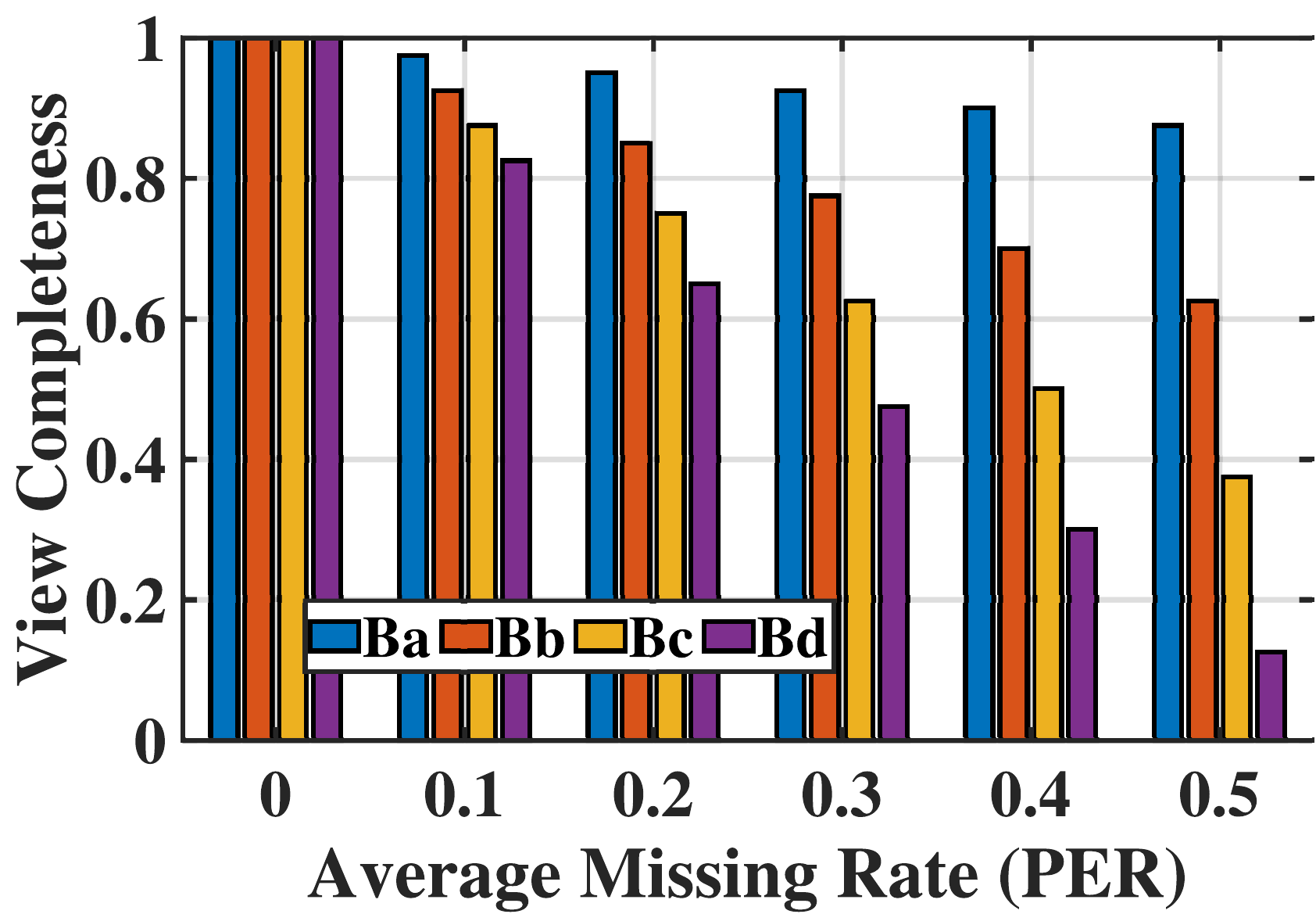}}
\caption{View availability comparison on unbalanced incomplete multi-view clustering, where ``UI'' means ``unbalanced incomplete multi-view clustering'' and ``View completeness'' represents the proportion of  presented instances in a view.}
\label{fig:dangeshitu_unba}
\end{figure*}
\subsection{Datasets}\label{subsection:dataset}
We conduct experiments on four real-world multi-view datasets (shown in Table~\ref{dataset}) as follows: BUAA dataset \cite{huang2012buaa}, 3-Sources dataset\footnote{\url{http://erdos.ucd.ie/datasets/3sources.html}.}, BBC dataset\footnote{\url{http://mlg.ucd.ie/datasets/segment.html}.}, and Digit dataset\footnote{\url{http://archive.ics.uci.edu/ml/datasets/Multiple+Features}.}.

%
%
%
%
1) BUAA-visnir face dataset (\textbf{BUAA}): BUAA has 90 visual images and 90 near infrared images (i.e., 180 instances), which are categorized into 10 classes (i.e., 10 clusters). Each image is described by two views: VIS and NIR.

2) 3 Sources dataset (\textbf{3 Sources}): 3-Sources contains 948 news articles collected from three online news sources (i.e., 3 views): BBC, Reuters (REU), and The Guardian (GUA). We select a subset with 169 articles (i.e., 169 instances), which are categorized into 6 topical labels (i.e., 6 clusters).

3) BBC dataset (\textbf{BBC}): BBC is collected from the BBC news website consisting of 685 documents (i.e., 685 instances). Each document was split into four segments (i.e., View Ba, View Bb, View Bc, and View Bd) and was manually annotated with one of five topical labels (i.e., 5 clusters).

4) Handwritten digit dataset (\textbf{Digit}): Digit contains 2000 handwritten numerals (i.e., 2000 instances) collected from the following views:
(i) 216 profile correlations (FAC), (ii) 76 Fourier coefficients of the character shapes (FOU), (iii) 64 Karhunen-Love coefficients (KAR), (iv) 240 pixel averages in $2\times 3$ windows (PIX), (v) 47 Zernike moments (ZER).

\noindent\textbf{Data preprocessing}: To obtain incomplete multi-view data for clustering, we need to preprocess the real data. Following \cite{hu2018doubly}, we randomly delete some instances from each view to obtain incomplete views.
For an incomplete multi-view dataset, we denote its average missing rate\footnote{The average missing rate is the average of the missing rates of all views.} as PER, and we set different PERs from 0 to 0.5.
To describe the missing rate of each view, we define an average missing rate vector $\bm{r}$=$[0,0.1,0.2,\ldots,0.5]$, which represents a series of average missing rates (from PER=0 to PER=0.5). For an unbalanced incomplete multi-view dataset, different views have distinct missing rate vectors, whose average is average missing rate vector $\bm{r}$. The unbalanced incompleteness includes many cases, and we use a representative case for each dataset.
The used missing rate vector of each view is shown in Table~\ref{datashezhi}. For example, for the 3-Sources dataset, when the average missing rate PER=0.1, the missing rate of GUA is $1.8\times 0.1$=$0.18$.
\subsection{Compared Methods}
We compare our proposed UIMC with the following eight state-of-the-art methods, which are the most relevant to our work:

(a) \textbf{BSV} (Best Single View)~\cite{zhao2016incomplete} first fills in the missing views with the average of instances in the corresponding view, then performs K-means clustering on each view separately. Finally, we report the best clustering result.

(b) \textbf{Concat}~\cite{zhao2016incomplete}, same as BSV, first fills in all the missing data, then concatenates all views into one single view, and finally performs K-means clustering on the single view.

(c) \textbf{PVC}~\cite{li2014partial} establishes a latent subspace to align two incomplete views.

(d) \textbf{IMG}~\cite{zhao2016incomplete} extends PVC by introducing manifold learning.

(e) \textbf{MIC}~\cite{shao2015multiple} extends MultiNMF via weighted NMF and $L_{2,1}$ regularization.

(f) \textbf{DAIMC}~\cite{hu2018doubly} extends MIC via weighted semiNMF and $L_{2,1}$-norm regularization regression.

(g) \textbf{IMSC$\_$AGL}~\cite{wen2018incompleteb} learns the common representation by exploiting  graph learning and spectral clustering.

(h) \textbf{UEAF}~\cite{wen2019unified} performs clustering by aligning the unified common embedding.

BSV and Concat are the baseline methods in our experiments.
Note that all results of compared methods are produced by released codes, some of which may be inconsistent with published information due to different pretreatment processes. All the codes in our experiments are implemented in MATLAB R2020b and run on a Windows 10 machine with $3.30$ GHz E3-1225 CPU, $64$ GB main memory.
We perform the following two types of experiments:

\noindent 1) Balanced incomplete multi-view clustering on the BUAA dataset: We compare UIMC with all the above methods.

\noindent 2) Unbalanced incomplete multi-view clustering on 3-Sources, BBC and Digit datasets: Note that MIC, DAIMC, IMSC$\_$AGL, and UEAF cannot cluster unbalanced incomplete multi-view datasets. For each dataset, similar to Concat, we concatenate all incomplete views into one view, and perform these methods on the single view. Since PVC and IMG are based on two views (neither one view nor more than two views), we do not compare UIMC with PVC and IMG here. For three parameters in UIMC, we set $\alpha$=$10^{-2}$, $\beta$=$10^{5}$, and $\eta$=$10^{-1}$.

\noindent\textbf{Evaluation metrics}: Following \cite{li2014partial}, we repeat each experiment 10 times and report the average clustering results.
Following~\cite{wen2018incompleteb,wen2019unified}, we evaluate the results by three popular clustering evaluation metrics: accuracy (ACC), normalized mutual information (NMI), and Purity. For these evaluation metrics, the larger value represents the better clustering performance.
%

\begin{table}[ht]
\centering
\scalebox{0.88}{
\begin{tabular}{c|c|c|cc}
\cline{1-5}
\multirow{2}*{Dataset}&\multirow{2}*{PER}&\multirow{2}*{View}&\multicolumn{2}{c}{Weight}\\\cline{4-5}
~&~&~&UEAF&Our UIMC\\\cline{1-5}
\multirow{18}*{3-Sources}&\multirow{3}*{0}&BBC&0.3316&0.3333\\\cline{3-5}
~&~&REU&0.3335&0.3333\\\cline{3-5}
~&~&GUA&0.3350&0.3333\\\cline{2-5}
~&\multirow{3}*{0.1}&BBC&0.3113&0.3597\\\cline{3-5}
~&~&REU&0.3333&0.3267\\\cline{3-5}
~&~&GUA&0.3554&0.3126\\\cline{2-5}
~&\multirow{3}*{0.2}&BBC&0.1155& 0.3976\\\cline{3-5}
~&~&REU&0.3715&0.3268\\\cline{3-5}
~&~&GUA&0.5129&0.2756\\\cline{2-5}
~&\multirow{3}*{0.3}&BBC&0.0634&0.4344\\\cline{3-5}
~&~&REU&0.1052&0.3470\\\cline{3-5}
~&~&GUA&0.8314&0.2186\\\cline{2-5}
~&\multirow{3}*{0.4}&BBC&0.0365&0.4952\\\cline{3-5}
~&~&REU&0.0628&0.3492\\\cline{3-5}
~&~&GUA&0.9007&0.1556\\\cline{2-5}
~&\multirow{3}*{0.5}&BBC&0.0060&0.6016\\\cline{3-5}
~&~&REU&0.0106&0.3333\\\cline{3-5}
~&~&GUA&0.9834&0.0650\\\cline{1-5}
\multirow{24}*{BBC}&\multirow{4}*{0}&Ba&0.2498&0.2500\\\cline{3-5}
~&~&Bb&0.2500&0.2500 \\\cline{3-5}
~&~&Bc&0.2499&0.2500\\\cline{3-5}
~&~&Bd&0.2503&0.2500\\\cline{2-5}
~&\multirow{4}*{0.1}&Ba&0.2391&0.2747\\\cline{3-5}
~&~&Bb&0.2450&0.2604\\\cline{3-5}
~&~&Bc&0.2518&0.2480 \\\cline{3-5}
~&~&Bd&0.2641&0.2168\\\cline{2-5}
~&\multirow{4}*{0.2}&Ba&0.0901& 0.2941 \\\cline{3-5}
~&~&Bb&0.1901&0.2619 \\\cline{3-5}
~&~&Bc&0.3481&0.2377\\\cline{3-5}
~&~&Bd&0.3716&0.2064\\\cline{2-5}
~&\multirow{4}*{0.3}&Ba&0.0618&0.3235\\\cline{3-5}
~&~&Bb&0.0853&0.2756 \\\cline{3-5}
~&~&Bc&0.1831&0.2220 \\\cline{3-5}
~&~&Bd&0.6697&0.1789\\\cline{2-5}
~&\multirow{4}*{0.4}&Ba&0.0398&0.3730\\\cline{3-5}
~&~&Bb&0.0536&0.2978\\\cline{3-5}
~&~&Bc&0.0811&0.2171\\\cline{3-5}
~&~&Bd&0.8255&0.1122\\\cline{2-5}
~&\multirow{4}*{0.5}&Ba&0.0198&0.4371\\\cline{3-5}
~&~&Bb&0.0280&0.3134 \\\cline{3-5}
~&~&Bc&0.0401&0.1984 \\\cline{3-5}
~&~&Bd&0.9120&0.0510\\\cline{1-5}
\multirow{30}*{Digit}&\multirow{5}*{0}&FAC&0.7537&0.2000\\\cline{3-5}
~&~&FOU&0.0720&0.2000\\\cline{3-5}
~&~&KAR&0.0262&0.2000\\\cline{3-5}
~&~&PIX&0.0519&0.2000\\\cline{3-5}
~&~&ZER&0.0961&0.2000\\\cline{2-5}
~&\multirow{5}*{0.1}&FAC&0.7368&0.2175\\\cline{3-5}
~&~&FOU&0.0745&0.2046\\\cline{3-5}
~&~&KAR&0.0275&0.2000\\\cline{3-5}
~&~&PIX&0.0556&0.1978\\\cline{3-5}
~&~&ZER&0.1055&0.1801\\\cline{2-5}
~&\multirow{5}*{0.2}&FAC&0.7142&0.2383\\\cline{3-5}
~&~&FOU&0.0741&0.2141\\\cline{3-5}
~&~&KAR&0.0289&0.1992\\\cline{3-5}
~&~&PIX&0.0594&0.1861\\\cline{3-5}
~&~&ZER&0.1234&0.1623\\\cline{2-5}
~&\multirow{5}*{0.3}&FAC&0.1136&0.2671\\\cline{3-5}
~&~&FOU&0.0998&0.2222 \\\cline{3-5}
~&~&KAR&0.0510&0.2045 \\\cline{3-5}
~&~&PIX&0.1170&0.1755\\\cline{3-5}
~&~&ZER&0.6186&0.1307\\\cline{2-5}
~&\multirow{5}*{0.4}&FAC&0.0556&0.2994 \\\cline{3-5}
~&~&FOU&0.0578&0.2374\\\cline{3-5}
~&~&KAR&0.0408& 0.1952\\\cline{3-5}
~&~&PIX&0.1080&0.1656\\\cline{3-5}
~&~&ZER&0.7377&0.1024\\\cline{2-5}
~&\multirow{5}*{0.5}&FAC&0.0279&0.3487 \\\cline{3-5}
~&~&FOU&0.0312&0.2521\\\cline{3-5}
~&~&KAR&0.0262&0.2006\\\cline{3-5}
~&~&PIX&0.0653&0.1518\\\cline{3-5}
~&~&ZER&0.8494&0.0468\\\cline{1-5}
\end{tabular}
}
\caption{Weight comparison between UEAF and UIMC.}
\label{quanzhongduibi}
\end{table}
\begin{table}[t]
\centering
\scalebox{0.95}{
\begin{tabular}{ccc|ccccccc}
\hline
\multicolumn{3}{c|}{Missing rate of each view}&\multirow{2}*{Method}&\multirow{2}*{ACC (\%)}&\multirow{2}*{NMI (\%)}&\multirow{2}*{Purity (\%)}\\\cline{1-3}
BBC&REU&GUA& ~  & ~ & ~ &~\\\hline
0.5& 0.5& 0.5&BSV       & 34.56  & 6.64    & 38.34  \\\hline
0.5& 0.5& 0.5 &Concat    & 47.46 & 34.13    & 58.34  \\\hline
0.5& 0.5& 0.5 &UIMC       & 60.36  & 44.45    & 68.82   \\\hline
0.25& 0.25& 1 &UIMC       & 55.03  & 43.97    & 68.05  \\\hline
0.2& 0.5& 0.8 &UIMC       & 52.07  & 50.97    & 70.41  \\\hline
0.2& 0.3& 1 &UIMC    & 57.74  & 47.68    & 69.82  \\\hline
0.1& 0.5& 0.9 &UIMC     & 60.95  & 45.41    & 68.63  \\\hline
0& 0.5& 1 &UIMC    & \textbf{62.72}  & \textbf{56.51}    & \textbf{72.78}  \\
\hline
\end{tabular}
}
\caption{Results on the 3-Sources Dataset with Average Missing Rate (PER) of 0.5 (Bold Numbers Mean the Best Results.)}
\label{3sourcesbiao}
\end{table}
\subsection{Experimental Results and Analysis}
Fig.~\ref{fig:UIMC_b} shows the balanced incomplete multi-view clustering results and Fig.~\ref{fig:UIMC_unb} gives the unbalanced incomplete multi-view clustering results. Obviously, UIMC performs best on all datasets, which shows that UIMC is not sensitive to the dataset.
Impressively, compared with the best performing method among compared methods on Digit dataset with PER=0.4 (Fig.~\ref{fig:UIMC_unb}(g)-\ref{fig:UIMC_unb}(h)), UIMC at least raises ACC by 44.50\%, NMI by 42.70\%, purity by 43.15\%.

\noindent\textbf{1) Balanced incomplete multi-view clustering}: In Fig.~\ref{fig:UIMC_b}, our proposed UIMC outperforms significantly compared methods for all missing rates. Specifically, relative to the compared methods, UIMC at least improves clustering performance (ACC+NMI+Purity)  by 17.63\% with PER=0.1, by 15.42\% with PER=0.3, by 11.06\% with PER=0.5.
It is mainly because based on $L_{2,1}$-norm, UIMC can learn the robust representation to decrease the influence of noises in BUAA.
Also, the superior improvement proves that UIMC performs better than other state-of-the-art methods on balanced incomplete multi-view clustering tasks.
Compared with PVC, IMG raises clustering results when PER=0.1 and PER=0.3. The reason is that based on a graph Laplacian term, IMG can learn the global structure. Obviously, UIMC obtains better clustering performance because UIMC aligns two incomplete views by minimizing the disagreement between their cluster indicator matrix and the consensus.
This shows that UIMC can handle balanced incomplete multi-view clustering tasks more effectively than the other state-of-the-art methods.

%

\noindent\textbf{2) Unbalanced incomplete multi-view clustering}: In Fig.~\ref{fig:UIMC_unb}, our proposed UIMC also obtains better performance than other state-of-the-art methods on three datasets for unbalanced incomplete clustering. As the PER increases, the clustering results of other methods drop sharply, but UIMC still keeps satisfactory clustering performance due to our proposed weighted MVSC. The main reason is that UIMC can cover these data with the help of $\Gamma$-norm, which decreases the influence of incompleteness. Interestingly, comparing the performance improvement of UIMC on 3-Sources, BBC and Digit datasets, we find that UIMC has the largest improvement on BBC and the least improvement on Digit.
It is because UIMC weights each view based on its incompleteness, which makes full use of these presented instances for clustering.
When PER$>$0, both MIC and DAIMC perform worse than BSV and Concat. It is because MIC and DAIMC rely on view alignment but they cannot align different views after concatenating all views.
This also shows that it is invalid to solve the unbalanced incomplete multi-view clustering problem by concatenating all the views.
Compared with Concat, IMSC$\_$AGL and UEAF have a small improvement. For example, when clustering the BBC dataset with PER=0.2, IMSC$\_$AGL and UEAF improve clustering results by up to 10\%, while our proposed UIMC improves the performance by at least 30\%. The large improvements show that UIMC effectively solves the unbalanced incomplete multi-view clustering problem.
The reason is that UIMC effectively bridges these incomplete views by constructing the Laplacian graphs with the same matrix size.
Moreover, it also shows our adaptive weighting strategy can adaptively assign larger weights to more available views for better clustering results.
\begin{table}[t]
\centering
\scalebox{1.0}{
\begin{tabular}{|c|cc|cc|cc|ccc}
\hline
\multirow{2}*{PER}&\multicolumn{2}{c|}{ACC (\%)}&\multicolumn{2}{c|}{NMI (\%)}&\multicolumn{2}{c|}{Purity (\%)} \\\cline{2-7}
~&full& fw&full& fw&full& fw \\\hline
0& \textbf{70.41}&65.09&\textbf{67.51}&63.66& \textbf{80.47}& 79.29\\\hline
0.1&\textbf{66.86}& 62.72& \textbf{65.31}& 57.73&\textbf{79.88}&72.78  \\\hline
0.2& \textbf{66.13}& 60.95& \textbf{64.37}&55.76&\textbf{79.29}&68.05   \\\hline
0.3&\textbf{65.68}&60.36&\textbf{62.25}&51.72&\textbf{78.70}&68.05 \\\hline
0.4&\textbf{68.05}&57.40&\textbf{61.65}&51.19&\textbf{78.11}&68.64   \\\hline
0.5&\textbf{66.87}&48.52&\textbf{51.66}&46.30&\textbf{71.60}&65.09 \\\hline
\end{tabular}
}
\caption{Performance comparison on the 3-Sources Dataset. ``full''  represents our full UIMC. ``fw'' means that we remove Eq.~\eqref{gengw} and only obtain the view weight through Eq.~\eqref{w_{v}} to perform UIMC. (Bold Numbers Mean the Best Results.)}
\label{jizhunshiyan}
\end{table}
\subsection{Weight Study}
Some incomplete multi-view clustering methods also weigh each view to identify its importance. In this section, we compare our proposed UIMC with these methods in detail. MIC and DAIMC use a diagonal incomplete indicator clustering matrix as a weight matrix. But they cannot directly quantify the importance of different views from this matrix. Borrowing the idea of BMVC \cite{8387526}, UEAF weighs different views for clustering. Since the weighting strategy of UEAF is the most similar to ours, we compare our proposed UIMC with UEAF.

Firstly, we will study the impact of unbalanced incompleteness on the view availability. Then, we will compare the weight learned by UEAF and the weight learned by UIMC on unbalanced incomplete multi-view clustering.
\subsubsection{Impact of Unbalanced Incompleteness on the View Availability}
To visually demonstrate the impact of unbalanced incompleteness on each view, we perform UIMC on each view of both balanced incomplete multi-view datasets (shown in Fig.~\ref{fig:dangeshitu_ba}) and unbalanced incomplete multi-view datasets (shown in Fig.~\ref{fig:dangeshitu_unba}). Note that the balanced incomplete multi-view clustering performance (i.e., Fig.~\ref{fig:dangeshitu_ba}) is the baseline. For each balanced incomplete multi-view dataset, we randomly delete the same number of instances from each view to obtain balanced incomplete views.

From Fig.~\ref{fig:dangeshitu_ba}, we can find that the relative availability of each view remains unchanged in most cases. For example, when clustering the BBC dataset, Bb has the highest availability on both PER=0.1 and PER=0.2. Moreover, under the same PER, the availability (or performance) of different views does not differ by more than 20\%. For instance, when clustering the 3-Sources dataset on PER=0.1, the difference between the three views is just 10\%.

From Fig.~\ref{fig:dangeshitu_unba}, we can notice that the view with a lower missing rate (i.e., strong view) tends to have higher availability than the view with a higher missing rate (i.e., weak view). For example, when clustering the BBC dataset on PER=0.5, Ba has the highest availability among the four views. It is because the incompleteness of Ba is the lowest (shown in Table~\ref{datashezhi}), and Ba has the most available information for clustering. Besides, as PER increases, the difference between the strong view and the weak view will increase. For instance, when clustering the 3-Sources dataset, in terms of NMI, BBC outperforms GUA by about 5\% on PER=0 and by about 30\% on PER=0.5. Moreover, the difference also illustrates the validity of Definition \ref{def_qiangruo} (i.e., ``Strong view'' and ``Weak view''). To further show the validity of our definitions, we report the view completeness in Fig.~\ref{fig:unbanlanced_zhu_wanzhengxing} and \ref{fig:unbanlanced_zhu_bbc_wanzhengxing}. Obviously, when PER$>0$, the view with higher completeness (lower incompleteness) obtains better performance. For example, when PER=0.4 on the 3-Sources dataset, BBC has the highest completeness and the best clustering performance, while GUA has the lowest completeness and the worst clustering performance. Thus, we can think that BBC is a strong view and GUA is a weak view, which is consistent with our definition in Definition \ref{def_qiangruo}.

One can ask what if changing the missing rate vector in Table~\ref{datashezhi}. Thus, we perform UIMC on 3-Sources Dataset with PER=0.5. The clustering results are shown in Table~\ref{3sourcesbiao}, where BSV and Concat are the baseline methods. Obviously, UIMC outperforms baseline methods in all the cases. Moreover, when the missing rates of BBC, REU, and GUA are 0, 0.5, and 1, respectively, UIMC obtains the best performance. The main reason is that based on the adaptive weighting strategy, UIMC can make full use of the information of each view for clustering. UIMC's satisfactory performance also shows the effectiveness of our proposed view evolution. Note that GUA is a dying view, which is removed by Eq.~\eqref{w_{v}}. Thus, we only integrate two preserved views (i.e., BBC and REU) for clustering. This also shows that we can directly handle several unbalanced incomplete multi-view clustering cases.

To further demonstrate the effectiveness of our weighting strategy (i.e., Eq.~\eqref{w_{v}} and Eq.~\eqref{gengw}), we set a baseline: fixing each weight (called ``fw'').
For fw, we remove Eq.~\eqref{gengw} and only obtain the view weight through Eq.~\eqref{w_{v}}. We call our full UIMC ``full''. We compare ``full'' with ``fw'' on the 3-Sources dataset.
Table~\ref{jizhunshiyan} shows the performance comparison.
Obviously, ``full'' outperforms ``fw'' in all the cases, which verifies the effectiveness of our weighting strategy. 

\subsubsection{Weight Comparison between UEAF and UIMC}
To further demonstrate the rationality of the weights we learn, we compare our proposed UIMC with UEAF. In the comparison, we perform UIMC and UEAF on three unbalanced incomplete multi-view datasets: 3-Sources, BBC and Digit. But UEAF cannot directly cluster these unbalanced incomplete data. Similar to BSV and Concat, we fill in the missing views with the average of instances in the corresponding views to obtain complete views, then perform UEAF on the complete multi-view dataset. For UIMC, we directly perform it on unbalanced incomplete multi-view datasets. Based on this experiment, we will compare the weights learned by UEAF and the weights learned by UIMC. The results are shown in Table~\ref{quanzhongduibi}.

Combining Fig.~\ref{fig:dangeshitu_unba} and Table~\ref{quanzhongduibi}, we can find that though UEAF can learn a weight for each view, this weight does not reflect the availability of a view. For example, when using UEAF to cluster the BBC dataset on PER=0.5, the weight of Ba is 0.0198 and the weight of Bd is 0.9120. In fact, as shown in Fig.~\ref{fig:unbanlanced_zhu_bbc_acc}-Fig.~\ref{fig:unbanlanced_zhu_bbc_pur}, Ba has higher availability than Bd. The reason is that UEAF ignores the impact of unbalanced incompleteness on view availability.

Impressively, our proposed UIMC can adaptively assign proper weight to each view, which can accurately reflect the availability of the view. For instance, when using UIMC to cluster the BBC dataset on PER=0.5, the weight of Ba is 0.4371 and the weight of Bd is 0.0510. As shown in Fig.~\ref{fig:unbanlanced_zhu_bbc_acc}-Fig.~\ref{fig:unbanlanced_zhu_bbc_pur}, Ba has higher availability than Bd, which also shows the effectiveness of our weight strategy.

In summary, when clustering unbalanced incomplete multi-view datasets, the weight learned by UIMC is more accurate than the weight learned by UEAF.

\subsection{Parameter Sensitivity and Convergence Study}
\begin{figure*}[t]
\centering
\subfigure[Study for $\alpha$ and $\beta$]{\label{fig:nmi_3d} \includegraphics[width=0.3\textwidth]{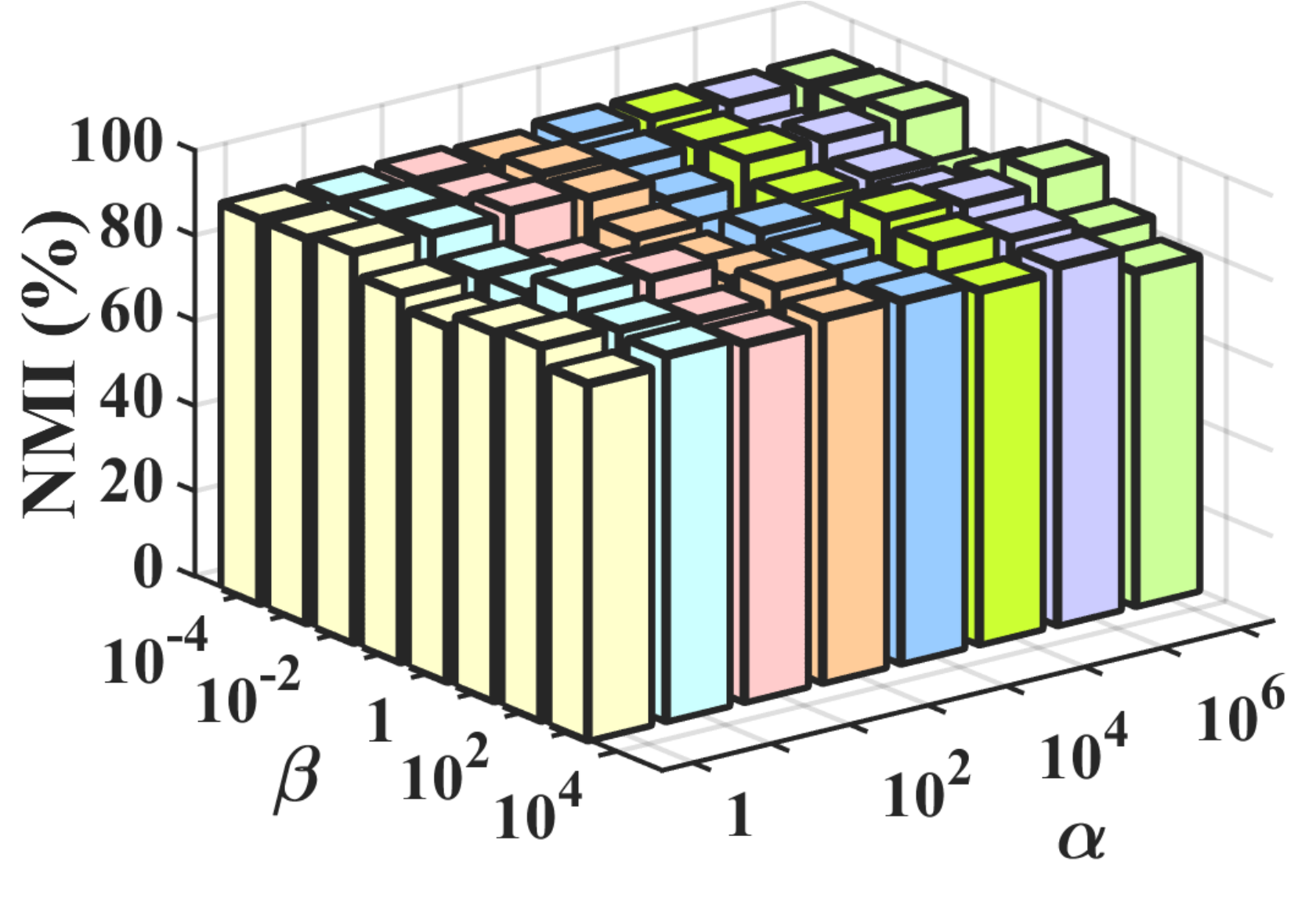} }
\subfigure[Study for $\eta$]{\label{fig:nmi_2d} \includegraphics[width=0.3\textwidth]{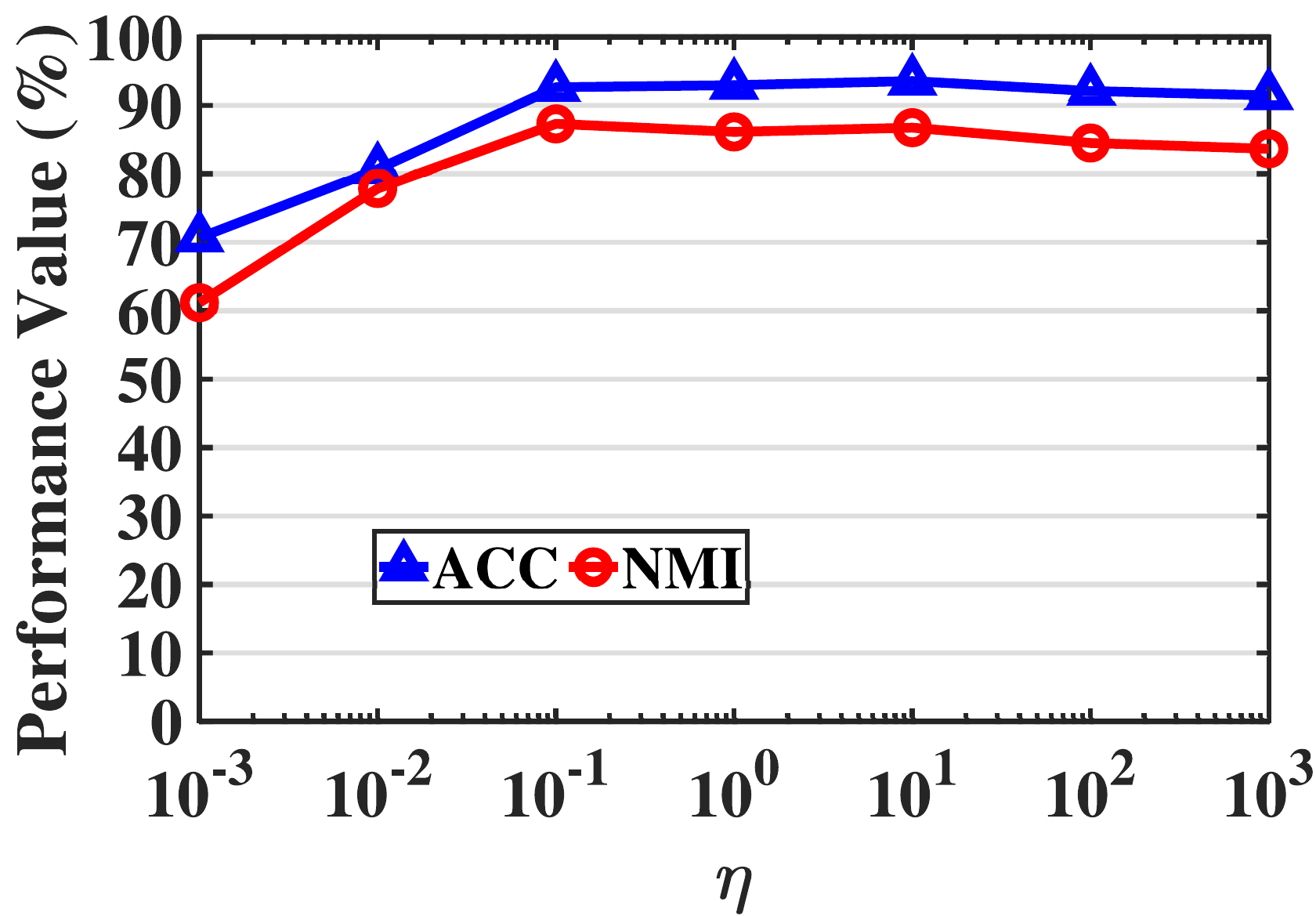}}
\subfigure[Convergence study]{\label{fig:plot_obj} \includegraphics[width=0.3\textwidth]{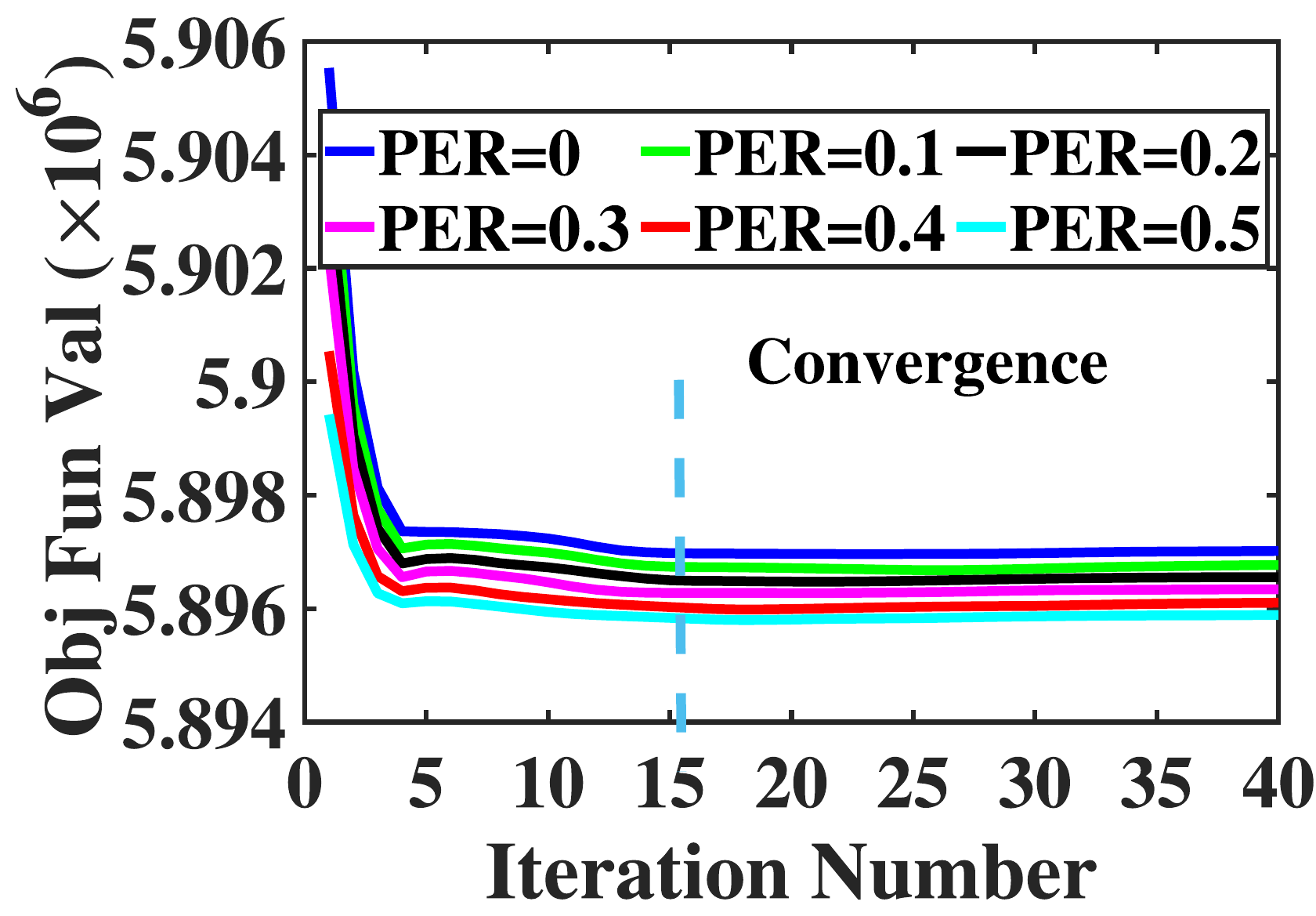} }
\caption{Parameter sensitivity and convergence study, where ``Obj Fun Val'' donates ``Objective function value''.}
\label{fig:UIMC_minganxing}
\end{figure*}
\begin{table}[t]
\centering
\scalebox{0.9}{
\setlength{\tabcolsep}{1.5mm}{
\begin{tabular}{c|ccccccccc}
\hline
\multicolumn{7}{c}{The BBC dataset}  \\\hline
PER& 0& 0.1&0.2 & 0.3& 0.4& 0.5  \\\hline
Time ($s$)&162.2325 & 153.9034& 146.9688& 141.0714 &140.3818 & 135.5273 \\\hline
\multicolumn{7}{c}{The Digit dataset}  \\\hline
PER& 0& 0.1&0.2 & 0.3& 0.4& 0.5  \\\hline
Time ($s$)& 730.9850& 561.5042& 435.9347& 349.0930& 303.1353& 261.1742\\\hline
\end{tabular}
}
}
\caption{Convergence time (Time ($s$)) under each average missing rate (PER) on BBC and Digit.}
\label{runtime}
\end{table}
We study the sensitivity of parameters ($\alpha$, $\beta$ and $\eta$) by conducting unbalanced incomplete multi-view clustering on the Digit dataset. We set PER=0.3 and report the clustering performance of UIMC versus $\alpha$, $\beta$ and $\eta$. From Fig.~\ref{fig:nmi_3d} and~\ref{fig:nmi_2d}, as these parameters change, UIMC keeps stable and satisfactory clustering performance. It illustrates that UIMC is insensitive to these parameters to some extent.
Besides, UIMC obtains the best clustering results when $\alpha$=$10^{-2}$, $\beta$=$10^{5}$ and $\eta$=$10^{-1}$, which are our recommended values.

We study UIMC's convergence by experimenting on the BBC dataset with different PERs. We set the hyper-parameters ${\alpha,\beta,\eta}$ as $10^{-2}, 10^{5}, 10^{-1}$, respectively. Fig.~\ref{fig:plot_obj} shows that the convergence curve versus the number of iterations. The blue dashed line indicates the convergence of the objective function.
It can be seen that UIMC has converged just after $16$ iterations for all PERs, which verifies the convergence of UIMC. Table~\ref{runtime} also shows the convergence time in different cases. Note that as PER increases, the convergence time will decrease. It is because when the missing rate increases, the data matrix will become more sparse. This sparse matrix can speed up calculations \cite{chen2018performance}.


\section{Conclusion}  \label{section:con}
In this paper, we propose a novel unbalanced incomplete multi-view clustering method, named UIMC. To our best knowledge, it presents the first effective method to cluster multiple views with different incompleteness. Inspired by the biological evolution theory, we propose the scheme of view evolution to integrate these unbalanced incomplete views for clustering.
After each iteration of optimization,
the weights of strong views increase, while the weights of the weak views decrease.
Extensive experiments on four real-world multi-view datasets show the superior performance gain and effectiveness of UIMC.
Impressively, compared with the baseline method Concat on the Digit dataset with the missing rate of 0.4, other state-of-the-art methods at least improve ACC by 5.20\%, NMI by 7.33\% and purity by 8.30\%, while our proposed UIMC at least improves ACC by 51.35\%, NMI by 51.26\% and purity by 51.20\%. In the feature, we will introduce online learning to reduce the  computational cost of UIMC.
%

\ifCLASSOPTIONcaptionsoff
  \newpage
\fi
\bibliographystyle{IEEEtran}
\bibliography{IEEEtran}
\end{document}